\documentclass[english]{article}
\usepackage[T1]{fontenc}
\usepackage[latin9]{inputenc}
\usepackage{array}
\usepackage{float}
\usepackage{textcomp}
\usepackage{url}
\usepackage{multirow}
\usepackage{amsmath}
\usepackage{amsthm}
\usepackage{amssymb}
\usepackage{graphicx}

\makeatletter

\providecommand{\tabularnewline}{\\}
\floatstyle{ruled}
\newfloat{algorithm}{tbp}{loa}
\providecommand{\algorithmname}{Algorithm}
\floatname{algorithm}{\protect\algorithmname}

\theoremstyle{plain}
\newtheorem{thm}{\protect\theoremname}
\theoremstyle{plain}
\newtheorem{prop}[thm]{\protect\propositionname}

\usepackage{microtype}
\usepackage{graphicx}
\usepackage{subfigure}
\usepackage{booktabs} 
\usepackage{hyperref}
\usepackage[final]{neurips_2021}
\usepackage{amsmath}
\usepackage{etoolbox}
\usepackage[noend]{algpseudocode}
\usepackage{amsmath}
\algnewcommand\algorithmicforeach{\textbf{for each}}
\algdef{S}[FOR]{ForEach}[1]{\algorithmicforeach\ #1\ \algorithmicdo}

\addtolength{\dbltextfloatsep}{-3mm} 
\addtolength{\abovedisplayskip}{-3mm} 
\addtolength{\belowdisplayskip}{-3mm} 
\addtolength{\dblfloatsep}{-3mm}
\usepackage[belowskip=-3mm]{caption}

\makeatother

\usepackage{babel}
\providecommand{\propositionname}{Proposition}
\providecommand{\theoremname}{Theorem}

\begin{document}
\global\long\def\argmax{\mathrm{argmax}}%

\global\long\def\memread{\mathrm{read}}%

\global\long\def\memwrite{\mathrm{write}}%

\global\long\def\softmax{\mathrm{softmax}}%

\global\long\def\sigmoid{\mathrm{sigmoid}}%

\global\long\def\argmax{\mathrm{argmax}}%

\title{Model-Based Episodic Memory Induces Dynamic Hybrid Controls}
\author{Hung Le, Thommen Karimpanal George, Majid Abdolshah, Truyen Tran,
Svetha Venkatesh\\
Applied AI Institute, Deakin University, Geelong, Australia\\
\texttt{thai.le@deakin.edu.au}}
\maketitle
\begin{abstract}
Episodic control enables sample efficiency in reinforcement learning
by recalling past experiences from an episodic memory. We propose
a new model-based episodic memory of trajectories addressing current
limitations of episodic control. Our memory estimates trajectory values,
guiding the agent towards good policies. Built upon the memory, we
construct a complementary learning model via a dynamic hybrid control
unifying model-based, episodic and habitual learning into a single
architecture. Experiments demonstrate that our model allows significantly
faster and better learning than other strong reinforcement learning
agents across a variety of environments including stochastic and non-Markovian
settings.
\end{abstract}

\section{Introduction}

Episodic memory or ``mental time travel'' \cite{dudai2005janus}
allows recreation of past experiences. In reinforcement learning (RL),
episodic control (EC) uses this memory to control behavior, and complements
forward model and simpler, habitual (cached) control methods. The
use of episodic memory\footnote{The episodic memory in this setting is an across-lifetime memory,
persisting throughout training.} is shown to be very useful in early stages of RL \cite{lengyel2008hippocampal,blundell2016model}
and backed up by cognitive evidence \cite{tulving1972episodic,tulving2002episodic}.
Using only one or few instances of past experiences to make decisions,
EC agents avoid complicated planning computations, exploiting experiences
faster than the other two control methods. In hybrid control systems,
EC demonstrates excellent performance and better sample efficiency
\cite{pritzel2017neural,lin2018episodic}.

Early works on episodic control use tabular episodic memory storing
a raw trajectory as a sequence of states, actions and rewards over
consecutive time steps. To select a policy, the methods iterate through
all stored sequences and are thus only suitable for small-scale problems
\cite{lengyel2008hippocampal,gershman2017reinforcement}. Other episodic
memories store individual state-action pairs, acting as the state-action
value table in tabular RL, and can generalize to novel states using
nearest neighbor approximations \cite{blundell2016model,pritzel2017neural}.
Recent works \cite{nishio2018faster,hansen2018fast,lin2018episodic,zhu2019episodic}
leverage both episodic and habitual learning by combining state-action
episodic memories with Q-learning augmented with parametric value
functions like Deep Q-Network (DQN; \cite{mnih2015human}). The combination
of the ``fast'' non-parametric episodic and ``slow'' parametric
value facilitates Complementary Learning Systems (CLS) -- a theory
posits that the brain relies on both slow learning of distributed
representations (neocortex) and fast learning of pattern-separated
representations (hippocampus) \cite{mcclelland1995there}. 

Existing episodic RL methods suffer from 3 issues: (a) near-deterministic
assumption \cite{blundell2016model} which is vulnerable to noisy,
stochastic or partially observable environments causing ambiguous
observations; (b) sample-inefficiency due to storing state-action-value
which demands experiencing all actions to make reliable decisions
and inadequate memory writings that prevent fast and accurate value
propagation inside the memory \cite{blundell2016model,pritzel2017neural};
and finally, (c) assuming fixed combination between episodic and parametric
values \cite{lin2018episodic,hansen2018fast} that makes episodic
contribution weight unchanged for different observations and requires
manual tuning of the weight. We tackle these open issues by designing
a novel model that flexibly integrates habitual, model-based and episodic
control into a single architecture for RL.

To tackle issue (a) the model learns representations of the trajectory
by minimizing a self-supervised loss. The loss encourages reconstruction
of past observations, thus enforcing a compressive and noise-tolerant
representation of the trajectory for the episodic memory. Unlike model-based
RL \cite{sutton1991dyna,ha2018recurrent} that simulates the world,
our model merely captures the trajectories.

To address issue (b), we propose a model-based value estimation mechanism
established on the trajectory representations. This allows us to design
a memory-based planning algorithm, namely Model-based Episodic Control
(MBEC), to compute the action value online at the time of making decisions.
Hence, our memory does not need to store actions. Instead, the memory
stores trajectory vectors as the keys, each of which is tied to a
value, facilitating nearest neighbor memory lookups to retrieve the
value of an arbitrary trajectory (memory $\mathrm{\memread}$). To
hasten value propagation and reduce noise inside the memory, we propose
using a weighted averaging $\mathrm{write}$ operator that writes
to multiple memory slots, plus a bootstrapped $\mathrm{refine}$ operator
to update the written values at any step.

Finally, to address issue (c), we create a flexible CLS architecture,
merging complementary systems of learning and memory. An episodic
value is combined with a parametric value via dynamic consolidation.
Concretely, conditioned on the current observation, a neural network
dynamically assigns the combination weight determining how much the
episodic memory contributes to the final action value. We choose DQN
as the parametric value function and train it to minimize the temporal
difference (TD) error (habitual control). The learning of DQN takes
episodic values into consideration, facilitating a distillation of
the episodic memory into the DQN's weights.

Our contributions are: (i) a new model-based control using episodic
memory of trajectories; (ii) a Complementary Learning Systems architecture
that addresses limitations of current episodic RL through a dynamic
hybrid control unifying model-based, episodic and habitual learning
(see Fig. \ref{fig:overview}); and, (iii) demonstration of our architecture
on a diverse test-suite of RL problems from grid-world, classical
control to Atari games and 3D navigation tasks. We show that the MBEC
is noise-tolerant, robust in dynamic grid-world environments. In classical
control, we show the advantages of the hybrid control when the environment
is stochastic, and illustrate how each component plays a crucial role.
For high-dimensional problems, our model also achieves superior performance.
Further, we interpret model behavior and provide analytical studies
to validate our empirical results.

\section{Background }

\subsection{Deep Reinforcement Learning}

Reinforcement learning aims to find the policy that maximizes the
future cumulative rewards of sequential decision-making problems \cite{sutton2018reinforcement}.
Model-based approaches build a model of how the environment operates,
from which the optimal policy is found through planning \cite{sutton1991dyna}.
Recent model-based RL methods can simulate complex environments enabling
sample-efficiency through allowing agents to learn within the simulated
``worlds'' \cite{hafner2019learning,kaiser2019model,hafner2020mastering}.
Unlike these works, Q-learning \cite{watkins1992q} -- a typical
model-free method, directly estimates the true state-action value
function. The function is defined as $Q\left(s,a\right)=\mathbb{E}_{\pi}\left[\sum_{t}\gamma^{t}r_{t}\mid s,a\right]$,
where $r_{t}$ is the reward at timestep $t$ that the agent receives
from the current state $s$ by taking action $a$, followed policy
$\pi$. $\gamma\in\left(0,1\right]$ is the discount factor that weights
the importance of upcoming rewards. Upon learning the function, the
best action can be found as $a_{t}=\underset{a}{\argmax}\,Q\left(s_{t},a\right)$.

With the rise of deep learning, neural networks have been widely used
to improve reinforcement learning. Deep Q-Network (DQN; \cite{mnih2015human})
learns the value function $Q_{\theta}\left(s,a\right)$ using convolutional
and feed-forward neural networks whose parameters are $\theta$. The
value network takes an image representation of the state $s_{t}$
and outputs a vector containing the value of each action $a_{t}$.
To train the networks, DQN samples observed transition $\left(s_{t},a_{t},r_{t},s_{t+1}\right)$
from a replay buffer to minimize the squared error between the value
output and target $y_{t}=r_{t}+\gamma\underset{a}{\max}\,Q_{\theta}^{\prime}\left(s_{t+1},a\right)$
where $Q_{\theta}^{\prime}$ is the target network. The parameter
of the target network is periodically set to that of the value network,
ensuring stable learning. The value network of DQN resembles a semantic
memory that gradually encodes the value of state-action pairs via
replaying as a memory consolidation in CLS theory \cite{kumaran2016learning}. 

Experience replay is critical for DQN, yet it is slow, requiring a
lot of observations since the replay buffer only stores raw individual
experiences. Prioritized Replay \cite{schaul2015prioritized} improves
replaying process with non-uniform sampling favoring important transitions.
Others overcome the limitation of one-step transition by involving
multi-step return in calculating the value \cite{lee2019sample,he2019learning}.
These works require raw trajectories and parallel those using episodic
memory that persists across episodes.

\subsection{Memory-based Controls}

Episodic control enables sample-efficiency through explicitly storing
the association between returns and state-action pairs in episodic
memory \cite{lengyel2008hippocampal,blundell2016model,pritzel2017neural}.
When combined with Q-learning (habitual control), the episodic memory
augments the value function with episodic value estimation, which
is shown beneficial to guide the RL agent to latch on good policies
during early training \cite{lin2018episodic,zhu2019episodic,hu2021generalizable}. 

In addition to episodic memory, Long Short-Term Memory (LSTM; \cite{hochreiter1997long})
and Memory-Augmented Neural Networks (MANNs; \cite{graves2014neural,graves2016hybrid})
are other forms of memories that are excel at learning long sequences,
and thus extend the capability of neural networks in RL. In particular,
Deep Recurrent Q-Network (DRQN; \cite{hausknecht2015deep}) replaces
the feed-forward value network with LSTM counterparts, aiming to solve
Partially-Observable Markov Decision Process (POMDP). Policy gradient
agents are commonly equipped with memories \cite{mnih2016asynchronous,graves2016hybrid,schulman2017proximal}.
They capture long-term dependencies across states, enrich state representation
and contribute to making decisions that require reference to past
events in the same episode. 

Recent works use memory for reward shaping either via contribution
analysis \cite{arjona2019rudder} or memory attention \cite{hung2019optimizing}.
To improve the representation stored in memory, some also construct
a model of transitions using unsupervised learning \cite{wayne2018unsupervised,ha2018recurrent,fortunato2019generalization}.
As these memories are cleared at the end of the episode, they act
more like working memory with a limited lifespan \cite{baddeley1974working}.
Relational \cite{zambaldi2018deep,pmlr-v119-le20b} and Program Memory
\cite{le2020neurocoder} are other forms of memories that have been
used for RL. They are all different from the notion of persistent
episodic memory, which is the main focus of this paper.

\section{Methods}

\begin{figure*}
\begin{centering}
\includegraphics[width=1\linewidth]{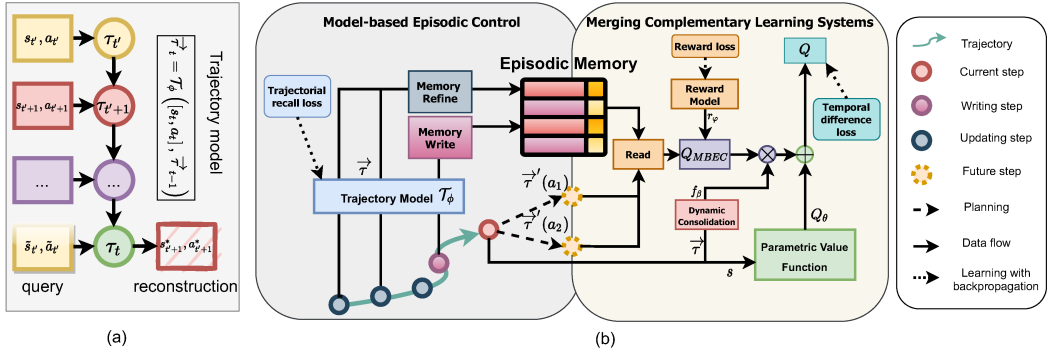}
\par\end{centering}
\caption{(a) Trajectorial Recall. The trajectory model reconstructs any past
observation along the trajectory given noisy preceding s-a pair as
a cue. (b) Dynamic hybrid control with the episodic memory at its
core. The trajectory model, trained with TR loss (Eq. \ref{eq:trj_l}),
encodes representations for writing (Eq. \ref{eq:knnw}) and updating
(Eq. \ref{eq:up}) the episodic memory. Model-based Episodic Control
(MBEC) plans actions (e.g. $a_{1}$ and $a_{2}$) and computes future
trajectory representations ($\protect\overrightarrow{\tau}^{\prime}\left(a_{1}\right)$
and $\protect\overrightarrow{\tau}^{\prime}\left(a_{2}\right)$) for
reading the memory's stored values. The read-out, together with the
reward model, estimates the episodic value $Q_{MBEC}$ (Eq. \ref{eq:mbec}).
The Complementary Learning Systems (CLS) combines $Q_{MBEC}$ and
the traditional semantic $Q_{\theta}$ using dynamic consolidation
conditioned on $\protect\overrightarrow{\tau}$ (Eq. \ref{eq:qe-qs}).
The parameters of the CLS are optimized with TD loss (Eq. \ref{eq:td_l};
habitual control). \label{fig:overview}}
\end{figure*}

We introduce a novel model that combines habitual, model based and
episodic control wherein episodic memory plays a central role. Thanks
to the memory storing trajectory representations, we can estimate
the value in a model-driven fashion: for any action considered, the
future trajectory is computed to query the episodic memory and get
the action value. This takes advantage of model-based planning and
episodic inference. The episodic value is then fused with a slower,
parametric value function to leverage the capability of both episodic
and habitual learning. Fig. \ref{fig:overview} illustrates these
components. We first present the formation of the trajectory representation
and the episodic memory. Next, we describe how we estimate the value
from this memory and parametric Q networks.

\subsection{Episodic Memory Stores Trajectory Representations}

In this paper, a trajectory $\tau_{t}$ is a sequence of what happens
up to the time step $t$: $\tau_{t}=\left[(s_{1},a_{1}),...,(s_{t},a_{t})\right]$.
If we consider $\tau_{t}$ as a piece of memory that encodes events
in an episode, from that memory, we must be able to recall any past
event. This ability in humans can be examined through the serial recall
test wherein a previous item cues to the recall of the next item in
a sequence \cite{farrell2012temporal}. We aim to represent trajectories
as vectors manifesting that property. As such, we employ a recurrent
trajectory network $\mathcal{T}_{\phi}$ to model $\overrightarrow{\tau}_{t}=\mathcal{T_{\phi}}\left(\left[s_{t},a_{t}\right],\overrightarrow{\tau}_{t-1}\right)$
where $\mathcal{T}_{\phi}$ is implemented as an LSTM \cite{hochreiter1997long}
and $\overrightarrow{\tau}_{t}\in\mathbb{R}^{H}$ is the vector representation
of $\tau_{t}$ and also the hidden state of the LSTM.

We train the \emph{trajectory model} $\mathcal{T}_{\phi}$ to compute
$\overrightarrow{\tau}_{t}$ such that it is able to reconstruct the
\emph{next} observation of any \emph{past} experience, simulating
the serial recall test. Concretely, given a noisy version $\left(\tilde{s}_{t^{\prime}},\tilde{a}_{t^{\prime}}\right)$
of a query state-action $\left(s_{t^{\prime}},a_{t^{\prime}}\right)$
sampled from a trajectory buffer $\mathcal{B}$ at time step $t^{\prime}<t$,
we minimize the\emph{ trajectorial recall (TR) loss} as follows,

\begin{align}
\mathcal{L}_{tr} & =\mathbb{E}\left(\left\Vert y^{*}\left(t\right)-\left[s_{t^{\prime}+1},a_{t^{\prime}+1}\right]\right\Vert _{2}^{2}\right)\label{eq:trj_l}
\end{align}
where $y^{*}\left(t\right)=\mathcal{G}_{\omega}\left(\mathcal{T_{\phi}}\left(\left[\tilde{s}_{t^{\prime}},\tilde{a}_{t^{\prime}}\right],\overrightarrow{\tau}_{t}\right)\right)$,
$\mathcal{G}_{\omega}$ is a reconstruction function, implemented
as a feed-forward neural network. The trajectory network $\mathcal{T}_{\phi}$
must implement some form of associative memory, compressing information
of any state-action query in the past in its current representation
$\overrightarrow{\tau}_{t}$ to reconstruct the next observation of
the query, keeping the TR loss low (see a visualization in Fig. \ref{fig:overview}
(a)). Appendix \ref{subsec:Why-Is-Trajectorial} theoretically explains
why the TR loss is suitable for episodic control. 

Our goal is to keep the trajectory representation and its associated
value as the key and value of an episodic memory $\mathcal{M}=\left\{ \mathcal{M}^{k},\mathcal{M}^{v}\right\} $,
respectively. $\mathcal{M}^{k}\in\mathbb{R}^{N\times H}$ and $\mathcal{M}^{v}\in\mathbb{R}^{N\times1}$,
where $N$ is the maximum number of memory slots. The true value of
a trajectory $\overrightarrow{\tau}_{t}$ is simply defined as the
value of the resulting state of the trajectory $V\left(\overrightarrow{\tau}_{t}\right)=V\left(s_{t+1}\right)=\mathbb{E}\left(\sum_{i=0}^{T-t-1}\gamma^{i}r_{t+1+i}\right)$,
$T$ is the terminal step. The memory stores estimation of the true
trajectory values through averaging empirical returns by our weighted
average writing mechanism (see in the next section). 

\begin{algorithm}[t]
\begin{algorithmic}[1]
\State{$\mathcal{D}$: Replay buffer; $\mathcal{B}$: Trajectory buffer; $\mathcal{C}$: Chunk buffer; $\mathcal{M}$: Episodic memory; $\mathcal{T}_{\phi}$: Trajectory model; $r_{\varphi}\left(s,a\right)$: Reward model; $L$: Chunk length}
\ForEach{episode}
\State{Initialize $\overrightarrow{\tau}_{0}=0$; $\mathcal{B}=\emptyset$; $\mathcal{C}=\emptyset$}
\For{$t=1,T$}
\State{Observe state $s_t$. Compute $f_{\beta}\left(\overrightarrow{\tau}_{t-1}\right)$.} 
\State{Select action $a_t \leftarrow \epsilon$-greedy policy using Q in Eq. $\ref{eq:qe-qs}$}
\State{Observe reward $r_t$. Move to next state $s_{t+1}$. Compute $\overrightarrow{\tau}_{t} \leftarrow \mathcal{T_{\phi}}\left(\left[s_{t},a_{t}\right],\overrightarrow{\tau}_{t-1}\right)$}
\State{Add ($s_t,a_t,s_{t+1},r_t,\overrightarrow{\tau}_{t-1},\overrightarrow{\tau}_{t}$) and ($s_t,a_t$) to $\mathcal{D}$ and  $\mathcal{B}$, respectively.}
\State{Refine memory: $\mathcal{M} \leftarrow \mathrm{refine}\left(s_{t},\overrightarrow{\tau}_{t-1}|\mathcal{M}\right)$}
\For{($s_{t^{\prime}},a_{t^{\prime}},s_{t^{\prime}+1},r_{t^{\prime}},\overrightarrow{\tau}_{t^{\prime}-1},\overrightarrow{\tau}_{t^{\prime}}$) sampled from $\mathcal{D}$}

\State{Compute $Q\left(s,a\right)$ using $s_{t^{\prime}},a_{t^{\prime}},\overrightarrow{\tau}_{t^{\prime}-1}$ (Eq. $\ref{eq:qe-qs}$)}
\State{Compute $Q\left(s^{\prime},a^{\prime}\right)$ using $s_{t^{\prime}+1},\overrightarrow{\tau}_{t^{\prime}}$ $\forall a^{\prime}$  (Eq. $\ref{eq:qe-qs}$)}
\State{Optimize $\mathcal{L}_{q}$ wrt $\theta$  and $\beta$  (Eq. $\ref{eq:td_l}$)}

\State{Compute $\mathcal{L}_{re}$ using $s_{t^{\prime}},a_{t^{\prime}},r_{t^{\prime}}$ (Eq.  $\ref{eq:reward_l}$).  Optimize $\mathcal{L}_{re}$ wrt $\varphi$}
\EndFor

\If{$t \bmod L==0$}
\State{Add ($\overrightarrow{\tau}_{t-1}$, $r_t$) to $\mathcal{C}$. Sample ($s_{t^{\prime}},a_{t^{\prime}}$) from $\mathcal{B}$  and optimize $\mathcal{L}_{tr}$ wrt $\phi$ and  $\omega$  (Eq. $\ref{eq:trj_l})$}
\EndIf

\If{$t==T$}
\ForEach {$\overrightarrow{\tau}_{i} \in \mathcal{C}$}
\State{Compute $\hat{V}\left(\overrightarrow{\tau}_{i}\right)=\sum_{j=i}^{T-1}\gamma^{j-i}r_{j+1}$. Write $\mathcal{M}\leftarrow \mathrm{\memwrite}\left(\overrightarrow{\tau}_{i},\hat{V}\left(\overrightarrow{\tau}_{i}\right)|\mathcal{M}\right)$}
\EndFor
\EndIf
\EndFor
\EndFor
\end{algorithmic}

\caption{MBEC++: Complementary reinforcement learning with MBEC and DQN.\label{alg:Model-based-Episodic-Control}}
\end{algorithm}

\subsection{Memory Operators}

\textbf{Memory reading} Given a query key $\overrightarrow{\tau}$,
we read from the memory the corresponding value by referring to neighboring
representations. Concretely, two reading rules are employed 

\begin{eqnarray*}
\mathrm{\memread}\left(\overrightarrow{\tau}|\mathcal{M}\right) & = & \begin{cases}
\sum_{i\in\mathcal{N}^{K_{r}}(\overrightarrow{\tau})}\frac{\left\langle \mathcal{M}_{i}^{k},\overrightarrow{\tau}\right\rangle \mathcal{M}_{i}^{v}}{\sum_{j\in\mathcal{N}^{K}(\overrightarrow{\tau})}\left\langle \mathcal{M}_{j}^{k},\overrightarrow{\tau}\right\rangle } & \left(a\right)\\
\max_{i\in\mathcal{N}^{K_{r}}(\overrightarrow{\tau})}\mathcal{M}_{i}^{v} & \left(b\right)
\end{cases}
\end{eqnarray*}
where $\left\langle \cdot\right\rangle $ is a kernel function and
$\mathcal{N}^{K_{r}}\left(\cdot\right)$ retrieves top $K_{r}$ nearest
neighbors of the query in $\mathcal{M}^{k}$. The read-out is an estimation
of the value of the trajectory $\overrightarrow{\tau}$ wherein the
weighted average rule $\left(a\right)$ is a balanced estimation,
while the max rule $\left(b\right)$ is optimistic, encouraging exploitation
of the best local experience. In this paper, the two reading rules
are simply selected randomly with a probability of $p_{read}$ and
$1-p_{read}$, respectively. 

\textbf{Memory writing} Given the writing key $\overrightarrow{\tau}$
and its estimated value $\hat{V}\left(\overrightarrow{\tau}\right)$,
the write operator $\mathrm{\memwrite}\left(\overrightarrow{\tau},\hat{V}\left(\overrightarrow{\tau}\right)|\mathcal{M}\right)$
consists of several steps. First, we add the value to the memories
$\mathcal{M}^{v}$ if the key cannot be found in the key memory $\mathcal{M}^{k}$
(this happens frequently since key match is rare). Then, we update
the values of the key neighbors such that the updated values are approaching
the written value $\hat{V}\left(\overrightarrow{\tau}\right)$ with
speeds relative to the distances as $\forall i\in\mathcal{N}^{K_{w}}(\overrightarrow{\tau}):$

\begin{align}
\mathcal{M}_{i}^{v} & \leftarrow\mathcal{M}_{i}^{v}+\alpha_{w}\left(\hat{V}\left(\overrightarrow{\tau}\right)-\mathcal{M}_{i}^{v}\right)\frac{\left\langle \mathcal{M}_{i}^{k},\overrightarrow{\tau}\right\rangle }{\sum_{j\in\mathcal{N}^{K_{w}}(\overrightarrow{\tau})}\left\langle \mathcal{M}_{j}^{k},\overrightarrow{\tau}\right\rangle }\label{eq:knnw}
\end{align}
where $\alpha_{w}$ is the writing rate. Finally, the key can be added
to the key memory. When it exceeds memory capacity $N$, the earliest
added will be removed. For simplicity, $K_{w}=K_{r}=K$.

We note that our memory writing allows multiple updates to multiple
neighbor slots, which is unlike the single-slot update rule \cite{blundell2016model,pritzel2017neural,lin2018episodic}.
Here, the written value is the Monte Carlo return collected from $t+1$
to the end of the episode $\hat{V}\left(\overrightarrow{\tau}_{t}\right)=\sum_{i=0}^{T-t-1}\gamma^{i}r_{t+1+i}$.
Following \cite{le2019learning}, we choose to write the trajectory
representation of every $L$-th time-step (rather than every time-step)
to save computation while still maintaining good memorization. Appendix
\ref{subsec:Convergence-Analysis-of} provides a mean convergence
analysis of our writing mechanism. 

\textbf{Memory refining} As the memory writing is only executed after
the episode ends, it delays the value propagation inside the memory.
Hence, we design the $\mathrm{refine}\left(s_{t},\overrightarrow{\tau}_{t-1}|\mathcal{M}\right)$
operator that tries to minimize the one-step TD error of the memory's
value estimation. As such, at an arbitrary timestep $t$, we estimate
the future trajectory if the agent takes action $a$ using the trajectory
model as $\overrightarrow{\tau}_{t}^{\prime}\left(a\right)=\mathcal{T}_{\phi}\left(\left[s_{t},a\right],\overrightarrow{\tau}_{t-1}\right)$.
Then, we can update the memory values as follows,

\begin{minipage}[t]{0.49\columnwidth}%
\begin{equation}
Q'=\underset{a}{\max}\,r_{\varphi}\left(s_{t},a\right)+\gamma\memread\left(\overrightarrow{\tau}_{t}^{\prime}\left(a\right)|\mathcal{M}\right)
\end{equation}
\end{minipage}%
\begin{minipage}[t]{0.49\columnwidth}%
\begin{align}
\mathcal{M} & \leftarrow\memwrite\left(\overrightarrow{\tau}_{t-1},Q'|\mathcal{M}\right)\label{eq:up}
\end{align}
\end{minipage}

where $r_{\varphi}$ is a reward model using a feed-forward neural
network. $r_{\varphi}$ is trained by minimizing 

\begin{equation}
\mathcal{L}_{re}=\mathbb{E}\left(r-r_{\varphi}\left(s,a\right)\right)^{2}\label{eq:reward_l}
\end{equation}
 The memory refining process can be shown to converge in finite MDP
environments.
\begin{prop}
In a finite MDP ($\mathcal{S}$,$\mathcal{A}$,$\mathcal{T}$,$\mathcal{R}$),
given a fixed bounded $r_{\varphi}$ and an episodic memory $\mathcal{M}$
with $\mathrm{\memread}$ (average rule) and $\mathrm{\memwrite}$
operations, the memory $\mathrm{refine}$ given by Eq. \ref{eq:up}
converges to a fixed point with probability 1 as long as $\gamma<1$,
$\sum_{t=1}^{\infty}\alpha_{w,t}=\infty$ and $\sum_{t=1}^{\infty}\alpha_{w,t}^{2}<\infty$.
\end{prop}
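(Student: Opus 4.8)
The plan is to cast the memory-refine update as a stochastic approximation (Robbins–Monro) iteration on the finite-dimensional vector of memory values, and then invoke the classical convergence theorem for stochastic approximation with a contraction operator (as used, e.g., in the convergence proof of asynchronous Q-learning). First I would fix notation: since the MDP is finite and $r_\varphi$ is fixed and bounded, and since the trajectory model $\mathcal{T}_\phi$ is a deterministic map, the set of trajectory representations $\overrightarrow{\tau}$ that can ever be written, together with their $K$-nearest-neighbor structure in $\mathcal{M}^k$, is effectively finite once the key memory has stabilized (the key set is bounded by $N$, and after finitely many episodes keys stop being novel in the finite-MDP regime — or one simply works on the event that the key memory is eventually fixed). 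So the object that evolves is the value vector $\mathcal{M}^v\in\mathbb{R}^{N\times1}$ indexed by stored keys, and Eq.~\ref{eq:up} performs, at each refine step, a convex-combination update of the $K$ neighbor entries of the current key $\overrightarrow{\tau}_{t-1}$ toward the target $Q' = \max_a r_\varphi(s_t,a) + \gamma\,\memread(\overrightarrow{\tau}'_t(a)\mid\mathcal{M})$.

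**Identifying the contraction.** The key step is to define the operator $\mathcal{H}:\mathbb{R}^{N}\to\mathbb{R}^{N}$ whose fixed point we are converging to. For a key with representation $\overrightarrow{\tau}$, set $(\mathcal{H}\mathcal{M}^v)(\overrightarrow{\tau}) = \max_a\big[ r_\varphi(s,a) + \gamma\,\memread(\overrightarrow{\tau}'(a)\mid\mathcal{M})\big]$ where $\memread$ (average rule) is the convex, weight-normalized average $\sum_{i\in\mathcal{N}^{K}(\cdot)} \frac{\langle \mathcal{M}_i^k,\cdot\rangle \mathcal{M}_i^v}{\sum_j \langle \mathcal{M}_j^k,\cdot\rangle}$. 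Because $\memread$ is a convex combination of entries of $\mathcal{M}^v$ (all kernel weights nonnegative and normalized), it is non-expansive in $\|\cdot\|_\infty$; the $\max_a$ is also non-expansive; and the $\gamma$ prefactor makes the composition a $\gamma$-contraction in the sup-norm, so $\mathcal{H}$ has a unique fixed point $\mathcal{M}^{v,*}$ by Banach. Then I rewrite Eq.~\ref{eq:knnw} applied with target $Q'$ as
\begin{equation}
\mathcal{M}^v_i \leftarrow \mathcal{M}^v_i + \alpha_{w,t}\,\eta_{i,t}\,\big((\mathcal{H}\mathcal{M}^v)(\overrightarrow{\tau}_{t-1}) - \mathcal{M}^v_i\big) + \alpha_{w,t}\,\eta_{i,t}\,\xi_{i,t},
\end{equation}
where $\eta_{i,t} = \langle \mathcal{M}^k_i,\overrightarrow{\tau}_{t-1}\rangle / \sum_{j} \langle \mathcal{M}^k_j,\overrightarrow{\tau}_{t-1}\rangle \in[0,1]$ is the normalized kernel weight folded into an effective per-slot step size $\alpha_{w,t}\eta_{i,t}$, and $\xi_{i,t} = Q'_t - (\mathcal{H}\mathcal{M}^v)(\overrightarrow{\tau}_{t-1})$ is the noise term, which has zero conditional mean (since $Q'_t$ is an unbiased sample of the expectation defining $\mathcal{H}$ through the environment's transition/reward draw) and bounded conditional variance because $r_\varphi$ and $\mathcal{M}^v$ stay bounded — a fact one establishes by an easy induction showing the iterates never leave the interval $[\,{-R/(1-\gamma)},\,R/(1-\gamma)\,]$ with $R = \sup|r_\varphi|$.

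**Invoking the stochastic approximation theorem.** With the iteration in the standard form above, I would apply the asynchronous stochastic approximation / Q-learning convergence theorem (Jaakkola–Jordan–Singh / Tsitsiklis, or Bertsekas–Tsitsiklis Prop. 4.4): the hypotheses are exactly (i) the contraction property of $\mathcal{H}$, which I established via convexity of $\memread$ and $\gamma<1$; (ii) the Robbins–Monro step-size conditions $\sum_t \alpha_{w,t}=\infty$ and $\sum_t \alpha_{w,t}^2<\infty$, which are assumed — here I need the effective step sizes $\alpha_{w,t}\eta_{i,t}$ to still satisfy these, so I must argue each stored slot is selected as a neighbor infinitely often and that the kernel weights $\eta_{i,t}$ are bounded below by a positive constant whenever slot $i$ is among the $K$ neighbors (true for any strictly positive kernel on the finite key set), so $\sum_t \alpha_{w,t}\eta_{i,t}=\infty$ while $\sum_t (\alpha_{w,t}\eta_{i,t})^2 \le \sum_t \alpha_{w,t}^2 <\infty$; and (iii) zero-mean, bounded-variance noise, shown above. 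The theorem then yields $\mathcal{M}^v_t \to \mathcal{M}^{v,*}$ almost surely.

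**Main obstacle.** I expect the genuinely delicate point to be not the contraction algebra but the \emph{asynchrony and the moving target}: unlike vanilla Q-learning, the key memory $\mathcal{M}^k$ itself changes (new keys inserted, oldest evicted at capacity $N$) and the trajectory representations come from a network $\mathcal{T}_\phi$ that is being trained concurrently. The clean statement must therefore be read under an implicit "asymptotically the keys and $\phi$ (hence $\overrightarrow{\tau}$'s) are fixed" assumption — otherwise the fixed point $\mathcal{M}^{v,*}$ is not even well-defined. So the crux of writing this carefully is to state that stabilization hypothesis explicitly (or restrict to the finite-MDP regime where after finitely many episodes every trajectory prefix has been seen and no fresh keys arrive), and then verify the "every slot visited infinitely often" condition for the refine operator's neighbor selection, which holds because in a finite MDP each reachable state — and hence, modulo $\mathcal{T}_\phi$, each stored key neighborhood — recurs infinitely often under an $\epsilon$-greedy behavior policy. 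Everything else is the routine sup-norm-contraction stochastic-approximation machinery.
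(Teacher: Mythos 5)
Your overall strategy---recast the refine update as a stochastic approximation driven by a sup-norm contraction and invoke the Jaakkola--Jordan--Singh-type theorem---is exactly the paper's, and you correctly identify the same stabilization caveat the paper makes explicit (the key set and neighbor structure are assumed fixed; no new elements are added). Where you differ is in the decomposition: you track the raw value vector $\mathcal{M}^{v}$ and fold the normalized kernel weights into effective per-slot step sizes $\alpha_{w,t}\eta_{i,t}$, so your operator is the plain Bellman-like map with modulus $\gamma$; the paper instead tracks the read-out $\memread_t(x)$ itself, absorbs the partial neighbor updates into the operator, and obtains the weaker modulus $\gamma_{K}=\frac{\gamma+K-1}{K}$ under the simplifying assumption $B_{ix}=1/K$. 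These are two bookkeepings of the same iteration (small steps with a strong contraction versus full steps with a weak contraction), and your version has the advantage of not needing the equal-weight assumption.

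There is, however, one genuine gap in your reduction. In the refine operator, every neighbor $i\in\mathcal{N}^{K}(\overrightarrow{\tau}_{t-1})$ is pulled toward the \emph{same} target $Q'$, which is $\mathcal{H}$ evaluated at the querying key $\overrightarrow{\tau}_{t-1}$ --- not at slot $i$'s own key. The asynchronous Q-learning theorem you cite requires component $i$ to be updated toward $(\mathcal{H}\mathcal{M}^{v})_{i}$, the $i$-th component of the operator image, so your iteration is not in the required form and your noise term $\xi_{i,t}$ is not conditionally zero-mean with respect to your $\mathcal{H}$ (indeed, since $\mathcal{T}_{\phi}$ is deterministic, $Q'_t$ equals your $(\mathcal{H}\mathcal{M}^{v})(\overrightarrow{\tau}_{t-1})$ exactly, and the "noise" is really the systematic discrepancy between targets at the query and at slot $i$). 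The fix is to replace $\mathcal{H}$ by the query-averaged operator $(\tilde{\mathcal{H}}\mathcal{M}^{v})_{i}=\sum_{x}p(x\mid i)\,(\mathcal{H}\mathcal{M}^{v})(x)$, where $p(\cdot\mid i)$ is the stationary frequency with which query $x$ selects $i$ as a neighbor; this is still a $\gamma$-contraction in $\|\cdot\|_{\infty}$ (an average of non-expansive evaluations composed with $\mathcal{H}$), and the theorem then yields convergence to the fixed point of $\tilde{\mathcal{H}}$ rather than of $\mathcal{H}$. Since the proposition only asserts convergence to \emph{a} fixed point, this patch suffices, but as written your argument identifies the wrong limiting operator. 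The paper sidesteps the issue by analyzing $\memread_t(x)$ for a fixed query $x$, where the target and the tracked quantity are evaluated at the same point by construction.
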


\begin{proof}
See Appendix \ref{subsec:Convergence-Analysis-of-1}.
\end{proof}

\subsection{Model-based Episodic Control (MBEC)}

Our agent relies on the memory at every timestep to choose its action
for the current state $s_{t}$. To this end, the agent first plans
some action and uses $\mathcal{T_{\phi}}$ to estimate the future
trajectory. After that, it reads the memory to get the value of the
planned trajectory. This mechanism takes advantage of model-based
RL's planning and episodic control's non-parametric inference, yielding
a novel hybrid control named Model-based Episodic Control (MBEC).
The state-action value then is

\begin{equation}
Q_{MBEC}\left(s,a\right)=r_{\varphi}\left(s,a\right)+\gamma\memread\left(\overrightarrow{\tau}^{\prime}\left(a\right)|\mathcal{M}\right)\label{eq:mbec}
\end{equation}
The MBEC policy then is $\pi\left(s\right)=\underset{a}{\argmax}\:Q_{MBEC}\left(s,a\right)$.
Unlike model-free episodic control, we compute on-the-fly instead
of storing the state-action value. Hence, the memory does not need
to store all actions to get reliable action values. 

\begin{figure*}
\begin{centering}
\includegraphics[width=0.95\linewidth]{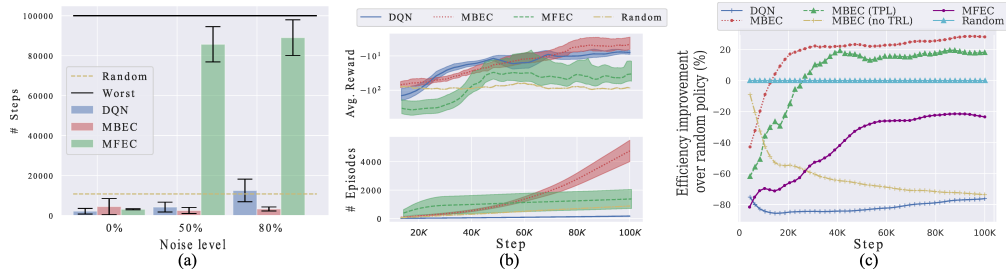}
\par\end{centering}
\caption{Maze navigation. (a) Noisy mode: number of steps required to complete
100 episodes with different noise rates (lower is better). (b) Trap
mode: average reward (upper) and number of completed episodes (lower)
over timesteps (higher is better). (c) Dynamic mode: efficiency improvement
over random policy across timesteps (higher is better).\label{fig:maze}}
\end{figure*}

\subsection{Model-based Episodic Control Facilitates Complementary Learning Systems}

The episodic value provides direct yet biased estimation from experiences.
To compensate for that, we can use a neural network $Q_{\theta}\left(s_{t},a_{t}\right)$
to give an unbiased value estimation \cite{mnih2015human}, representing
the slow-learning semantic memory that gradually encodes optimal values.
Prior works combine by a weighted summation of the episodic and semantic
value wherein the weight is fixed \cite{lin2018episodic,hansen2018fast}.
We believe that depending on the specific observations, we may need
different weights to combine the two values. Hence, we propose to
combine the two episodic and semantic systems as

\begin{equation}
Q\left(s_{t},a_{t}\right)=Q_{MBEC}\left(s_{t},a_{t}\right)f_{\beta}\left(\overrightarrow{\tau}_{t-1}\right)+Q_{\theta}\left(s_{t},a_{t}\right)\label{eq:qe-qs}
\end{equation}
where $f_{\beta}$ is a feed-forward neural network with sigmoid activation
that takes the previous trajectory as the input and outputs a consolidating
weight for the episodic value integration. This allows dynamic integration
conditioned on the trajectory status. The semantic system learns to
take episodic estimation into account in making decisions via replaying
to minimize one-step TD error,

\begin{equation}
\mathcal{L}_{q}=\mathbb{E}\left(r+\gamma\max_{a^{\prime}}Q\left(s^{\prime},a^{\prime}\right)-Q\left(s,a\right)\right)^{2}\label{eq:td_l}
\end{equation}
Here we note that $Q_{MBEC}$ is also embedded in the target, providing
better target value estimation in the early phase of learning when
the parametric model does not learn well. We follow \cite{mnih2015human}
using a replay buffer $\mathcal{D}$ to store $\left(s,a,s^{\prime},r\right)$
across episodes. Without episodic contribution, TD or habitual learning
is slow \cite{hansen2018fast,lin2018episodic}. Our episodic integration
allows the agent to rely on MBEC whenever it needs to compensate for
immature parametric systems. Alg. \ref{alg:Model-based-Episodic-Control}
(MBEC++) summarizes MBEC operations within the complementary learning
system. The two components (MBEC and CLS) are linked by the episodic
memory as illustrated in Fig. \ref{fig:overview} (b).

\section{Results}

\begin{figure*}
\begin{centering}
\includegraphics[width=0.95\textwidth]{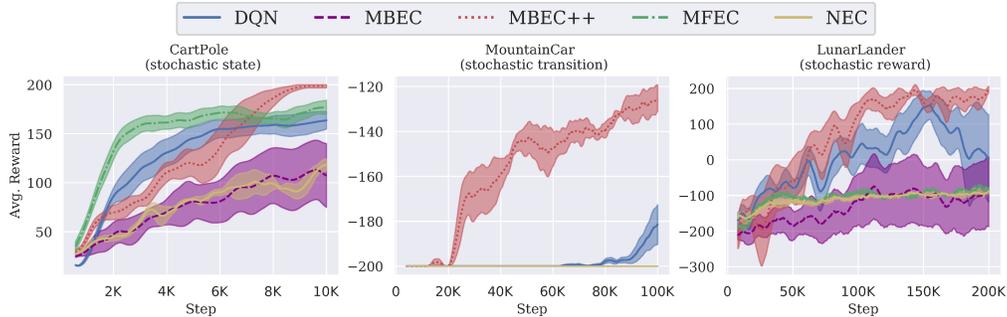}
\par\end{centering}
\caption{Average reward over learning steps on representative stochastic classical
control environments (higher is better, mean and std. over 10 runs).\label{fig:sto_class3}}
\end{figure*}

In this section, we examine our proposed episodic control both as
a stand-alone (MBEC) and within a complementary learning system (MBEC++).
To keep our trajectory model simple, for problems with image state,
it learns to reconstruct the feature vector of the image input, rather
than the image itself. The main baselines are DQN \cite{mnih2015human}
and recent (hybrid) episodic control methods. Details of the baseline
configurations and hyper-parameter tuning for each tasks can be found
in Appendix \ref{subsec:Experimental-Details}. 

\subsection{Grid-world: 2D Maze Exploration\label{subsec:Gridworld:-2D-Maze}}

We begin with simple maze navigation to explore scenarios wherein
our proposed memory shows advantages. In this task, an agent is required
to move from the starting point $\left(0,0\right)$ to the endpoint
$\left(n_{e}-1,n_{e}-1\right)$ in a maze environment of size $n_{e}\times n_{e}$.
In the maze task, if the agent hits the wall of the maze, it gets
$-1$ reward. If it reaches the goal, it gets $+1$ reward. For each
step in the maze, the agent get $-0.1/n_{e}^{2}$ reward. An episode
ends either when the agent reaches the goal or the number of steps
exceeds 1000. We create different scenarios for this task ($n_{e}=3$):
noisy, trap and dynamic modes wherein our MBEC is compared with both
parametric (DQN) and memory-based (MFEC; \cite{blundell2016model})
models (see Appendix \ref{subsec:Maze-task} for details and more
results).

\textbf{Noisy mode} In this mode, the state is represented as the
location plus an image of the maze. The image is preprocessed by a
pretrained ResNet, resulting in a feature vector of $512$ dimensions
(the output layer before softmax). We apply dropout to the image vector
with different noise levels. We hypothesize that aggregating states
into trajectory representation as in MBEC is a natural way to smooth
out the noise of individual states. 

Fig. \ref{fig:maze} (a) measures sample efficiency of the models
on noisy mode. Without noise, all models can quickly learn the optimal
policy and finish 100 episodes within  1000 environment steps. However,
as the increased noise distracts the agents, MFEC cannot find a way
out until the episode ends. DQN performance reduces as the noise increases
and ends up even worse than random exploration. By contrast, MBEC
tolerates noise and performs much better than DQN and the random agent.

\textbf{Trap mode} The state is the position of the agent plus a trap
location randomly determined at the beginning of the episode. If the
agent falls into the trap, it receives a $-2$ reward (the episode
does not terminate). This setting illustrates the advantage of memory-based
planning. With MBEC, the agent remembers to avoid the trap by examining
the future trajectory to see if it includes the trap. Estimating state-action
value (DQN and MFEC) introduces overhead as they must remember all
state-action pairs triggering the trap. 

We plot the average reward and number of completed episodes over time
in Fig. \ref{fig:maze} (b). In this mode, DQN always learns a sub-optimal
policy, which is staying in the maze. It avoids hitting the wall and
trap, however, completes a relatively low number of episodes. MFEC
initially learns well, quickly completing episodes. However its learning
becomes unstable as more observations are written into its memory,
leading to a lower performance over time. MBEC alone demonstrates
stable learning, significantly outperforming other baselines in both
reward and sample efficiency. 

\textbf{Dynamic mode} The state is represented as an image and the
maze structure randomly changes for each episode. A learnable CNN
is equipped for each baseline to encode the image into a $64$-dimensional
vector. In this dynamic environment, similar state-actions in different
episodes can lead to totally different outcomes, thereby highlighting
the importance of trajectory modeling. In this case, MFEC uses VAE-CNN,
trained to reconstruct the image state. Also, to verify the contribution
of TR loss, we add two baselines: (i) MBEC without training trajectory
model (no TRL) and (ii) MBEC with a traditional model-based transition
prediction loss (TPL) (see Appendix \ref{subsec:Maze-task} for more
details).

We compare the models' efficiency improvement over random policy by
plotting the percentage of difference between the models' number of
finished episodes and that of random policy in Fig. \ref{fig:maze}
(c). DQN and MFEC perform worse than random. MBEC with untrained trajectory
model performs poorly. MBEC with trajectory model trained with TPL
shows better performance than random, yet still underperforms our
proposed MBEC with TRL by around 5-10\%. 

\begin{figure}
\begin{centering}
\includegraphics[width=0.95\columnwidth]{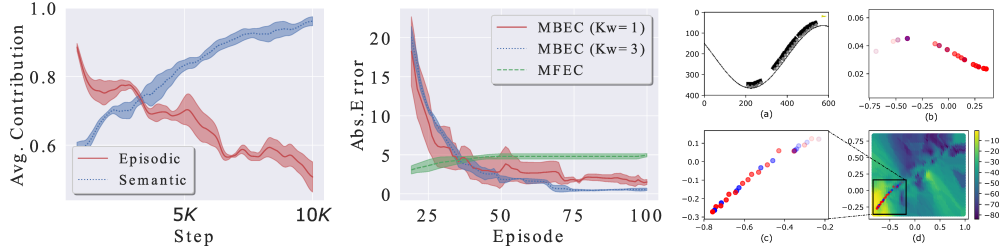}
\par\end{centering}
\caption{Cart Pole. (Left) Average contribution of episodic and semantic value
estimations over timesteps (see Appendix \ref{subsec:app_scc} for
contribution definition). The episodic influence is gradually replaced
by the semantic's. (Middle) The difference (absolute error) between
the stored and true value of the starting state (mean and std. over
5 runs). \label{fig:mcontrib}(Right) Mountain Car. (a) Visualization
of the car moving uphill over 30 timesteps. Due to noise, the next
state can be observed as the current state with probability $p_{tr}$.
(b) State and (c) trajectory spaces: the axes are the dimension of
the trajectory vectors $\vec{\tau}$ projected into 2d space. Blue
denotes the representation at noisy timestep and red the normal ones.
Fading color denotes earlier timesteps. (d) Value estimation by the
episodic memory for the whole $2$-d trajectory space. \label{fig:vis_mc_full}}
\end{figure}

\subsection{Stochastic Classical Control\label{subsec:Stochastic-Classical-Control}}

In stochastic environments, taking correct actions can still lead
to low rewards due to noisy observations, which negatively affects
the quality of episodic memory. We consider 3 classical problems:
Cart Pole, Mountain Car and Lunar Lander. For each problem, we examine
RL agents in stochastic settings by (i) perturbing the reward with
Gaussian (mean 0, std. $\sigma_{re}=0.2$) or (ii) Bernoulli noise
(with probability $p_{re}=0.2$, the agent receives a reward $-r$
where $r$ is the true reward) and (iii) noisy transition (with probability
$p_{tr}=0.5$, the agent observes the current state as its next state
despite taking any action). In this case, we compare MBEC++ with DQN,
MFEC and NEC \cite{pritzel2017neural}. 

Fig. \ref{fig:sto_class3} shows the performances of the models on
representative environments (full results in Appendix \ref{subsec:app_scc}).
For easy problems like Cart Pole, although MFEC learns quickly, its
over-optimistic control is sub-optimal in non-deterministic environments,
and thus cannot solve the task. For harder problems, stochasticity
makes state-based episodic memories (MFEC, NEC) fail to learn. DQN
learns in these settings, albeit slowly, illustrating the disadvantage
of not having a reliable episodic memory. Among all, MBEC++ is the
only model that can completely solve the noisy Cart Pole within 10,000
steps and demonstrates superior performance in Mountain Car and Lunar
Lander. Compared to MBEC++, MBEC performs badly, showing the importance
of CLS. 
\begin{flushleft}
\textbf{Behavior analysis} The episodic memory plays a role in MBEC++'s
success. In the beginning, the agent mainly relies on the memory,
yet later, it automatically learns to switch to the semantic value
estimation (see Fig. \ref{fig:mcontrib} (left)). That is because
in the long run the semantic value is more reliable and already fused
with the episodic value through Eq. \ref{eq:qe-qs}-\ref{eq:td_l}.
Our $\memwrite$ operator also helps MBEC++ in quickly searching for
the optimal value. To illustrate that, we track the convergence of
the episodic memory's written values for the starting state of the
stochastic Cart Pole problem under a fixed policy (Fig. \ref{fig:mcontrib}
(middle)). Unlike the over-optimistic MFEC's writing rule using $\max$
operator, ours enables mean convergence to the true value despite
Gaussian reward noise. When using a moderate $K_{w}>1$, the estimated
value converges better as suggested by our analysis in Appendix \ref{subsec:Convergence-Analysis-of}.
Finally, to verify the contribution of the trajectory model, we examine
MBEC++'s ability to counter noisy transition by visualizing the trajectory
spaces for Mountain Car in Fig. \ref{fig:vis_mc_full} (right). Despite
the noisy states (Fig. \ref{fig:vis_mc_full} (right, b)), the trajectory
model can still roughly estimate the trace of trajectories (Fig. \ref{fig:vis_mc_full}
(right, c)). That ensures when the agent approaches the goal, the
trajectory vectors smoothly move to high-value regions in the trajectory
space (Fig. \ref{fig:vis_mc_full} (right, d)). We note that for this
problem that ability is only achieved through training with TR loss
(comparison in Appendix \ref{subsec:app_scc}).
\par\end{flushleft}

\textbf{Ablation study} We pick the noisy transition Mountain Car
problem for ablating components and hyper-parameters of MBEC++ with
different neighbors ($K$), chunk length ($L$) and memory slots ($N$).
The results in Fig. \ref{fig:mc_abl} demonstrate that the performance
improves as $K$ increases, which is common for KNN-based methods.
We also find that using a too short or too long chunk deteriorates
the performance of MBEC++. Short chunk length creates redundancy as
the stored trajectories will be similar while long chunk length makes
minimizing TR loss harder. Finally, the results confirm that the learning
of MBEC++ is hindered significantly with small memory. A too-big memory
does not help either since the trajectory model continually refines
the trajectory representation, a too big memory slows the replacement
of old representations with more accurate newer ones. 

We also ablate MBEC++: (i) without TR loss, (ii) with TP loss (iii),
without multiple write ($K_{w}=1$) and (iv) without memory $\mathrm{update}$.
We realize that the first two configurations show no sign of learning.
The last two can learn but much slower than the full MBEC++, justifying
our neighbor memory writing and update (Fig. \ref{fig:mc_abl} (rightmost)).
More ablation studies are in Appendix \ref{subsec:Ablation-study}
where we find our dynamic consolidation is better than fixed combinations
and optimal $p_{read}$ is 0.7.

\begin{table}
\centering{}%
\begin{tabular}{ccc}
\hline 
\multirow{1}{*}{{\small{}Model }} & \multirow{1}{*}{{\small{}All}} & {\small{}25 games}\tabularnewline
\hline 
{\small{}Nature DQN} & {\small{}15.7/51.3} & {\small{}83.6/16.0}\tabularnewline
{\small{}MFEC} & {\small{}85.0/45.4} & {\small{}77.7/40.9}\tabularnewline
{\small{}NEC} & {\small{}99.8/54.6} & {\small{}106.1/53.3}\tabularnewline
{\small{}EMDQN{*}} & {\small{}528.4/92.8} & {\small{}250.6/95.5}\tabularnewline
{\small{}EVA} & {\small{}-} & {\small{}172.2/39.2}\tabularnewline
{\small{}ERLAM} & {\small{}-} & {\small{}515.4/103.5}\tabularnewline
\hline 
{\small{}MBEC++} & \textbf{\small{}654.0/117.2} & \textbf{\small{}518.2/133.4}\tabularnewline
\hline 
\end{tabular}\caption{Human normalized scores (mean/median) at 10 million frames for all
and a subset of 25 games. Baselines' numbers are adopted from original
papers and \cite{zhu2019episodic}, respectively. {*} The baseline
is reported with 40 million frames of training. - The exact numbers
are not reported.\label{tab:Human-noramlzed-scores}}
\end{table}

\begin{figure}
\begin{centering}
\includegraphics[width=1\textwidth]{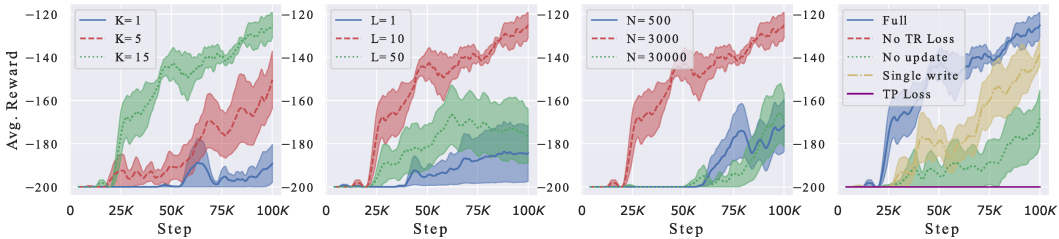}
\par\end{centering}
\caption{Noisy Transition Mountain Car: ablation study. Each plot varies one
hyper-parameter or ablated component while fixing others in default
values ($K=15$, $L=10$, $N=3000$).\label{fig:mc_abl}}
\end{figure}

\subsection{Atari 2600 Benchmark\label{subsec:Atari-2600-Benchmark}}

We benchmark MBEC++ against other strong episodic memory models in
playing Atari 2600 video games \cite{bellemare2013arcade}. The task
can be challenging with stochastic and partially observable games
\cite{kaiser2019model}. Our model adopts DQN \cite{mnih2015human}
with the same setting (details in Appendix \ref{subsec:app_atari}).
We only train the models within 10 million frames for sample efficiency.

Table \ref{tab:Human-noramlzed-scores} reports the average performance
of MBEC++ and baselines on all (57) and 25 popular Atari games concerning
human normalized metrics \cite{mnih2015human}. Compared to the vanilla
DQN, MFEC and NEC, MBEC++ is significantly faster at achieving high
scores in the early learning phase. Further, MBEC++ outperforms EMDQN
even trained with 40 million frames and achieves the highest median
score. Here, state-action value estimations fall short in quickly
solving complicated tasks with many actions like playing Atari games
as it takes time to visit enough state-action pairs to create useful
memory's stored values. By contrast, when the models in MBEC++ are
well-learnt (which is often within 5 million frames, see Appendix
\ref{subsec:app_atari}), its memory starts providing reliable trajectory
value estimation to guide the agent to good policies. Remarkably,
our episodic memory is much smaller than that of others and our trajectory
model size is insignificant to DQN's (Appendix \ref{subsec:app_atari}
and Table \ref{tab:Number-of-trainable}). 

In the subset testbed, MBEC++ demonstrates competitive performance
against trajectory-utilized models including EVA \cite{hansen2018fast}
and ERLAM \cite{zhu2019episodic}. These baselines work with trajectories
as raw state-action sequences, unlike our distributed trajectories.
In the mean score metric, MBEC++ is much better than EVA (nearly double
score) and slightly better than ERLAM. MBEC++ agent plays consistently
well across games without severe fluctuations in performance, indicated
by its significantly higher median score. 

We also compare MBEC++ with recent model-based RL approaches including
Dreamer-v2 \cite{hafner2020mastering} and SIMPLE \cite{kaiser2019model}.
The results show that our method is competitive against these baselines.
Notably, our trajectory model is much simpler than the other methods
(we only have TR and reward losses and our network is the standard
CNN of DQN for Atari games). Appendix \ref{subsec:app_atari} provides
more details, learning curves and further analyses.

\subsection{POMDP: 3D Navigation\label{subsec:POMDP:-3D-Navigation}}

To examine MBEC++ on Partially-Observable Markov Decision Process
(POMDP) environments, we conduct experiments on a 3D navigation task:
Gym Mini-World's Pickup Objects \cite{gym_miniworld}. Here, an agent
moves around a big room to collect several objects ($+1$ reward for
each picked object). The location, shape and color of the objects
change randomly across episodes. The state is the frontal view-port
of the agent and encoded by a common CNN for all baselines (details
in Appendix \ref{subsec:app_3dnav}). 

We train all models for only 2 million steps and report the results
for a different number of objects in Appendix's Fig. \ref{fig:3d}.
Among all baselines, MBEC++ demonstrates the best learning progress
and consistently improves over time. Other methods either fail to
learn (DRQN) or show a much slower learning speed (DQN and PPO). That
proves our MBEC++ is useful for POMDP.

\section{Conclusion}

We have introduced a new episodic memory that significantly accelerates
reinforcement learning in various problems beyond near-deterministic
environments. Its success can be attributed to: (a) storing distributed
trajectories produced by a trajectory model, (b) memory-based planning
with fast value-propagating memory writing and refining, and (c) dynamic
consolidation of episodic values to parametric value function. Our
experiments demonstrate the superiority of our method to prior episodic
controls and strong RL baselines. One limitation of this work is the
large number  of hyperparameters, which prevents us from fully tuning
MBEC++. In future work, we will extend to continuous action space
and explore multi-step memory-based planning capability of our approach. 

Our research aims to improve sample-efficiency of RL and can be trained
with common computers. Our method improves the performance in various
RL tasks, and thus opens the chance for creating better autonomous
systems that work flexibly across sectors (robotics, manufacturing,
logistics, and decision support systems). Although we do not think
there are immediate bad consequences, we are aware of potential problems.
First, our method does not guarantee safe exploration during training.
If learning happens in a real-world setting (e.g. self-driving car),
the agent can make unsafe exploration (e.g. causing accidents). Second,
we acknowledge that our method, like many other Machine Learning algorithms,
can be misused in unethical or malicious activities.

\subsubsection*{ACKNOWLEDGMENTS}

This research was partially funded by the Australian Government through
the Australian Research Council (ARC). Prof Venkatesh is the recipient
of an ARC Australian Laureate Fellowship (FL170100006).

\bibliographystyle{plain}
\bibliography{dtm}

\newpage{}

\section*{Appendix}

\renewcommand\thesubsection{\Alph{subsection}}

\subsection{Analytical Studies on Model-based Episodic Memory}

\subsubsection{Why Is Trajectorial Recall (TR) Loss Good for Episodic Memory?\label{subsec:Why-Is-Trajectorial}}

For proper episodic control, neighboring keys should represent similar
trajectories. If we simply assume that two trajectories are similar
if they share many common transitions, training the trajectory model
with TR loss indeed somehow enforces that property. To illustrate,
we consider simple linear $\mathcal{T_{\phi}}$ and $\mathcal{G}_{\omega}$
such that the reconstruction process becomes

\[
y^{*}\left(t\right)=W\left(U\overrightarrow{\tau}_{t}+V\left[s_{t^{\prime}},a_{t^{\prime}}\right]\right)
\]
Here, we also assume that the query is clean without added noise.
Then we can rewrite TR loss for a trajectory $\tau_{t}$

\begin{align*}
\mathcal{L}_{tr}\left(\tau_{t}\right) & =\sum_{t^{\prime}=1}^{t}\left\Vert W\left(U\overrightarrow{\tau}_{t}+V\left[s_{t^{\prime}},a_{t^{\prime}}\right]\right)-\left[s_{t^{\prime}+1},a_{t^{\prime}+1}\right]\right\Vert _{2}^{2}\\
 & =\sum_{t^{\prime}=1}^{t}\left\Vert \Delta_{t^{\prime}}\left(\tau_{t}\right)\right\Vert _{2}^{2}
\end{align*}

Let us denote $S\neq\emptyset$ the set of common transition steps
between 2 trajectories: $\tau_{t_{1}}^{1}$ and $\tau_{t_{2}}^{2}$,
by applying triangle inequality,

\begin{align*}
\mathcal{L}_{tr}\left(\tau_{t_{1}}^{1}\right)+\mathcal{L}_{tr}\left(\tau_{t_{2}}^{2}\right) & \geq\sum_{t^{\prime}\in S}\left\Vert \Delta_{t^{\prime}}\left(\tau_{t_{1}}^{1}\right)\right\Vert _{2}^{2}+\left\Vert \Delta_{t^{\prime}}\left(\tau_{t_{2}}^{2}\right)\right\Vert _{2}^{2}\\
 & \geq\sum_{t^{\prime}\in S}\left\Vert \Delta_{t^{\prime}}\left(\tau_{t_{1}}^{1}\right)-\Delta\left(\tau_{t_{2}}^{2}\right)\right\Vert _{2}^{2}\\
 & =\left|S\right|\left\Vert WU\left(\overrightarrow{\tau}_{t_{1}}^{1}-\overrightarrow{\tau}_{t_{2}}^{2}\right)\right\Vert _{2}^{2}
\end{align*}

If we assume $WU\left(\overrightarrow{\tau}_{t_{1}}^{1}-\overrightarrow{\tau}_{t_{2}}^{2}\right)\neq0$
as $\overrightarrow{\tau}_{t_{1}}^{1}\neq\overrightarrow{\tau}_{t_{2}}^{2}$,
applying Lemma 2.3 in \cite{grcar2010matrix} yields

\[
\frac{\mathcal{L}_{tr}\left(\tau_{t_{1}}^{1}\right)+\mathcal{L}_{tr}\left(\tau_{t_{2}}^{2}\right)}{\left|S\right|\sigma_{min}\left(WU\right)}\geq\left\Vert \overrightarrow{\tau}_{t_{1}}^{1}-\overrightarrow{\tau}_{t_{2}}^{2}\right\Vert _{2}^{2}
\]
where $\sigma_{min}\left(WU\right)$ is the smallest nonzero singular
value of $WU$. As the TR loss decreases and the number of common
transition increases, the upper bound of the distance between two
trajectory vectors decreases, which is desirable. On the other hand,
it is unclear whether the traditional transition prediction loss holds
that property.

\subsubsection{Convergence Analysis of $\protect\memwrite$ Operator\label{subsec:Convergence-Analysis-of}}

In this section, we show that we can always find $\alpha_{w}$ such
that the writing converges with probability $1$ and analyze the convergence
as $\alpha_{w}$ is constant. To simplify the notation, we rewrite
Eq. \ref{eq:knnw} as 

\begin{equation}
\mathcal{M}_{i}^{v}\left(n+1\right)=\mathcal{M}_{i}^{v}\left(n\right)+\text{\ensuremath{\lambda\left(n\right)}}\left(R_{j}\left(n\right)-\mathcal{M}_{i}^{v}\left(n\right)\right)\label{eq:write2}
\end{equation}
where $i$ and $j$ denote the current memory slot being updated and
its neighbor that initiates the writing, respectively. $\lambda\left(n\right)=\alpha_{w}\left(n\right)\frac{\left\langle \right\rangle _{ij}\left(n\right)}{\sum_{b\in\mathcal{N}_{j}^{K_{w}}}\left\langle \right\rangle _{bj}\left(n\right)}$
where $\mathcal{N}_{j}^{K_{w}}$ is the set of $K_{w}$ neighbors
of $j$. $R_{j}$ is the empirical return of the trajectory whose
key is the memory slot $j$, $\left\langle \right\rangle _{ij}$ the
kernel function of 2 keys and $n$ the number of updates. As mentioned
in \cite{sutton2018reinforcement}, this stochastic approximation
converges when $\sum_{n=1}^{\infty}\lambda\left(n\right)=\infty$
and $\sum_{n=1}^{\infty}\lambda^{2}\left(n\right)<\infty$. 

By definition, $\left\langle \right\rangle _{ij}=\frac{1}{\left\Vert \overrightarrow{\tau}_{i}-\overrightarrow{\tau}_{j}\right\Vert +\epsilon}$
and $\left\Vert \overrightarrow{\tau}\right\Vert \leq1$ since $\overrightarrow{\tau}$
is the hidden state of an LSTM. Hence, we have $\forall i,j$: $0<\frac{1}{2+\epsilon}\leq\left\langle \right\rangle _{ij}\leq\frac{1}{\epsilon}$.
Hence, let $B_{ij}\left(n\right)$ a random variable denoting $\frac{\left\langle \right\rangle _{ij}\left(n\right)}{\sum_{b\in\mathcal{N}_{j}^{K_{w}}}\left\langle \right\rangle _{bj}\left(n\right)}$--the
neighbor weight at step $n$, $\forall i,j:$

\[
\frac{\epsilon}{K_{w}\epsilon+2K_{w}-2}\leq B_{ij}\left(n\right)\leq\frac{2+\epsilon}{K_{w}\epsilon+2}
\]
That yields $\sum_{n=1}^{\infty}\lambda\left(n\right)\geq\frac{\epsilon}{K_{w}\epsilon+2K-2}\sum_{n=1}^{\infty}\alpha_{w}\left(n\right)$
and $\sum_{n=1}^{\infty}\lambda^{2}\left(n\right)\leq\left(\frac{2+\epsilon}{K_{w}\epsilon+2}\right)^{2}\sum_{n=1}^{\infty}\alpha_{w}^{2}\left(n\right)$.
Hence the writing updates converge when $\sum_{n=1}^{\infty}\alpha_{w}\left(n\right)=\infty$
and $\sum_{n=1}^{\infty}\alpha_{w}^{2}\left(n\right)<\infty$. We
can always choose such $\alpha_{w}$ (e.g., $\alpha_{w}\left(n\right)=\frac{1}{n+1}$). 

With a constant writing rate $\alpha$, we rewrite Eq. \ref{eq:write2}
as

\begin{align*}
\mathcal{M}_{i}^{v}\left(n+1\right) & =\mathcal{M}_{i}^{v}\left(n\right)+\alpha B_{ij}\left(n\right)\left(R_{j}\left(n\right)-\mathcal{M}_{i}^{v}\left(n\right)\right)\\
 & =\alpha B_{ij}\left(n\right)R_{j}\left(n\right)+\text{\ensuremath{\mathcal{M}_{i}^{v}\left(n\right)\left(1-\alpha B_{ij}\left(n\right)\right)}}\\
 & =\sum_{t=1}^{n}\alpha B_{ij}\left(t\right)\prod_{l=t+1}^{n}\left(1-\alpha B_{ij}\left(l\right)\right)R_{j}\left(t\right)\\
 & +\prod_{t=1}^{n}\left(1-\alpha B_{ij}\left(t\right)\right)\mathcal{M}_{i}^{v}\left(1\right)
\end{align*}
where the second term $\prod_{t=1}^{n}\left(1-\alpha B_{ij}\left(t\right)\right)\mathcal{M}_{i}^{v}\left(1\right)\rightarrow0$
as $n\rightarrow\infty$ since $B_{ij}\left(t\right)$ and $\alpha$
are bounded between 0 and 1. The first term can be decomposed into
three terms

\begin{align*}
\sum_{t=1}^{n}\alpha B_{ij}\left(t\right)\prod_{l=t+1}^{n}\left(1-\alpha B_{ij}\left(l\right)\right)R_{j}\left(t\right) & =T_{1}+T_{2}+T_{3}
\end{align*}

where 
\begin{align*}
T_{1} & =\sum_{t=1}^{n}\alpha B_{ij}\left(t\right)\prod_{l=t+1}^{n}\left(1-\alpha B_{ij}\left(l\right)\right)V_{i}\\
T_{2} & =\sum_{t=1}^{n}\alpha B_{ij}\left(t\right)\prod_{l=t+1}^{n}\left(1-\alpha B_{ij}\left(l\right)\right)\Delta V_{ij}\left(t\right)\\
T_{3} & =\sum_{t=1}^{n}\alpha B_{ij}\left(t\right)\prod_{l=t+1}^{n}\left(1-\alpha B_{ij}\left(l\right)\right)\tilde{R}_{j}\left(t\right)
\end{align*}
Here,$V_{i}$ is the true value of the trajectory stored in slot $i$,
$\Delta V_{ij}\left(t\right)=V_{j}\left(t\right)-V_{i}$ and $\tilde{R}_{j}\left(t\right)=R_{j}\left(t\right)-V_{j}\left(t\right)$
the noise term between the return and the true value. Assume that
the value is associated with zero mean noise and the value noise is
independent with the neighbor weights, then $\mathbb{E}\left(T_{3}\right)=0$\footnote{This assumption is true for the Perturbed Cart Pole Gaussian reward
noise. }. 

Further, we make other two assumptions: (1) the neighbor weights are
independent across update steps; (2) the probability $p_{j}$ of visiting
a neighbor $j$ follows the same distribution across update steps
and thus, $\mathbb{E}\left(B_{ij}\left(t\right)\right)=\mathbb{E}\left(B_{ij}\left(l\right)\right)=\mathbb{E}\left(B_{ij}\right)$.
We now can compute 

\begin{align*}
\mathbb{E}\left(T_{1}\right) & =\mathbb{E}\left(\sum_{t=1}^{n}\alpha B_{ij}\left(t\right)\prod_{l=t+1}^{n}\left(1-\alpha B_{ij}\left(l\right)\right)V_{i}\right)\\
 & =V_{i}\sum_{t=1}^{n}\alpha\mathbb{E}\left(B_{ij}\right)\prod_{l=t+1}^{n}\left(1-\alpha\mathbb{E}\left(B_{ij}\right)\right)\\
 & =V_{i}\alpha\mathbb{E}\left(B_{ij}\right)\sum_{t=1}^{n}\left(1-\alpha\mathbb{E}\left(B_{ij}\right)\right)^{n-t}\\
 & =V_{i}\alpha\mathbb{E}\left(B_{ij}\right)\frac{1-\left(1-\alpha\mathbb{E}\left(B_{ij}\right)\right)^{n}}{1-\left(1-\alpha\mathbb{E}\left(B_{ij}\right)\right)}\\
 & =V_{i}\left(1-\left(1-\alpha\mathbb{E}\left(B_{ij}\right)\right)^{n}\right)
\end{align*}
As $n\rightarrow\infty$, $\mathbb{E}\left(T_{1}\right)\rightarrow V_{i}$
since since $B_{ij}\left(t\right)$ and $\alpha$ are bounded between
0 and 1. 

Similarly, $\mathbb{E}\left(T_{3}\right)=\mathbb{E}\left(V_{j}\left(t\right)-V_{i}\right)=\mathbb{E}\left(V_{j}\left(t\right)\right)-V_{i}=\sum_{j\in\mathcal{N}_{i}^{K_{w}}}p_{j}V_{j}-V_{i}$,
which is the approximation error of the KNN algorithm. Hence, with
constant learning rate, on average, the $\memwrite$ operator leads
to the true value plus the approximation error of KNN. The quality
of KNN approximation determines the mean convergence of $\memwrite$
operator. Since the bias-variance trade-off of KNN is specified by
the number of neighbors $K$, choosing the right $K>1$ (not too big,
not too small) is important to achieve good writing to ensure fast
convergence. That explains why our writing to multiple slots ($K>1$)
is generally better than the traditional writing to single slot ($K=1$).

\subsubsection{Convergence Analysis of $\mathrm{refine}$ Operator\label{subsec:Convergence-Analysis-of-1}}

In this section, we study the convergence of the memory-based value
estimation by applying $\mathrm{refine}$ operator to the memory.
As such, we treat the $\memread\left(\overrightarrow{\tau}|\mathcal{M}\right)$
operator as a value function over trajectory space $\mathcal{T}$
and simplify the notation as $\memread\left(x\right)$ where $x$
represents the trajectory. We make the assumption that the $\memread$
operator simply uses averaging rule and the set of neighbors stored
in the memory is fixed (i.e. no new element is added to the memory)
, then 

\begin{eqnarray*}
\mathrm{\memread}_{t}\left(x\right) & = & \sum_{i\in\mathcal{N}^{K}(x)}B_{ix}\mathcal{M}_{i,t}^{v}
\end{eqnarray*}
where $B_{ix}=\frac{\left\langle \mathcal{M}_{i}^{k},x\right\rangle }{\sum_{j\in\mathcal{N}^{K}\left(x\right)}\left\langle \mathcal{M}_{j}^{k},x\right\rangle }$
is the neighbor weight and $t$ is the step of updating.

We rewrite the $\mathrm{refine}$ operator as

\begin{align*}
\mathcal{M} & \leftarrow\memwrite\left(x,\underset{a}{\max}\,r_{\varphi}\left(x,a\right)+\gamma\memread_{t}\left(y\right)|\mathcal{M}\right)\\
\Leftrightarrow\forall i\in\mathcal{N}^{K}(x) & :\\
\mathcal{M}_{i,t+1}^{v} & =\mathcal{M}_{i,t}^{v}\\
 & +\alpha_{w,t}B_{ix}\left(\underset{a}{\max}\,r_{\varphi}\left(x,a\right)+\gamma\memread_{t}\left(y\right)-\mathcal{M}_{i,t}^{v}\right)
\end{align*}
where $y$ is the trajectory after taking action $a$ from the trajectory
$x$. Then, after the $\mathrm{refine}$, 

\begin{eqnarray*}
\mathrm{\memread}_{t+1}\left(x\right) & = & \sum_{i\in\mathcal{N}^{K}(x)}B_{ix}\mathcal{M}_{i,t+1}^{v}\\
 & = & \sum_{i\in\mathcal{N}^{K}(x)}\mathcal{M}_{i,t}^{v}B_{ix}\left(1-\alpha_{w,t}\right)\\
 &  & +\alpha_{w,t}\sum_{i\in\mathcal{N}^{K}(x)}\left(\underset{a}{\max}\,r_{\varphi}\left(x,a\right)+\gamma\memread_{t}\left(y\right)\right)B_{ix}^{2}\\
 &  & +\alpha_{w,t}\sum_{i\in\mathcal{N}^{K}(x)}\mathcal{M}_{i,t}^{v}B_{ix}\left(1-B_{ix}\right)\\
 & = & \mathrm{\memread}_{t}\left(x\right)\left(1-\alpha_{w,t}\right)+\alpha_{w,t}G_{t}\left(x\right)
\end{eqnarray*}
where $G_{t}\left(x\right)=\underset{a}{\max}\,U_{t}\left(x,a\right)\sum_{i\in\mathcal{N}^{K}(x)}B_{ix}^{2}+\sum_{i\in\mathcal{N}^{K}(x)}\mathcal{M}_{i,t}^{v}B_{ix}\left(1-B_{ix}\right)$,
$U\left(x,a\right)=r_{\varphi}\left(x,a\right)+\gamma\memread\left(y\right)$.
To simplify the analysis, we assume the stored neighbors of $x$ are
apart from $x$ by the same distance, i.e., $\forall i\in\mathcal{N}^{K}(x):B_{ix}=\frac{1}{K}$.
That is,

\[
G\left(x\right)=\left(\underset{a}{\max}\,r_{\varphi}\left(x,a\right)+\gamma\memread\left(y\right)\right)\frac{1}{K}+\memread\left(x\right)\frac{K-1}{K}
\]

Let $\mathrm{H}$ an operator defined for the function $\memread:\mathcal{T\rightarrow}\mathbb{R}$
as

\begin{align*}
\mathrm{H}\memread\left(x\right) & =\sum_{\hat{y}\in\mathcal{T}}P_{a^{*}}\left(x,y\right)G_{t}\left(x|a^{*}\right)
\end{align*}
where $a^{*}=\underset{a}{\argmax}\sum_{\hat{y}\in\mathcal{T}}P_{a}\left(x,y\right)U\left(x,a\right)$.
We will prove $\mathrm{H}$ is a contraction in the sup-norm. 

Let us denote $\Delta\memread\left(x\right)=\memread_{1}\left(x\right)-\memread_{2}\left(x\right)$,
$\Delta PU\left(x,a_{1}^{*},a_{2}^{*}\right)=\sum_{y\in\mathcal{T}}P_{a_{1}^{*}}\left(x,y\right)U_{1}\left(x,a_{1}^{*}\right)$
$-\sum_{y\in\mathcal{T}}P_{a_{2}^{*}}\left(x,y\right)U_{2}\left(x,a_{2}^{*}\right)$
and $\hat{a}=\underset{a_{1}^{*},a_{2}^{*}}{\argmax}\sum_{y\in\mathcal{T}}P_{a_{1}^{*}}\left(x,y\right)U_{1}\left(x,a_{1}^{*}\right)$,
$\sum_{y\in\mathcal{T}}P_{a_{2}^{*}}\left(x,y\right)U_{2}\left(x,a_{2}^{*}\right)$.
Then,

\begin{align*}
\left\Vert \mathrm{H}\memread_{1}-\mathrm{H}\memread_{2}\right\Vert _{\infty} & =\left\Vert \sum_{y\in\mathcal{T}}\left(\Delta PU\left(x,a_{1}^{*},a_{2}^{*}\right)\frac{1}{K}\right.\right.\\
 & +\left.\left.\Delta\memread\left(x\right)\frac{K-1}{K}\right)\right\Vert _{\infty}\\
 & \leq\left\Vert \sum_{y\in\mathcal{T}}\Delta PU\left(x,a_{1}^{*},a_{2}^{*}\right)\frac{1}{K}\right\Vert _{\infty}\\
 & +\left\Vert \frac{K-1}{K}\Delta\memread\left(x\right)\right\Vert _{\infty}\\
 & \leq\left\Vert \sum_{y\in\mathcal{T}}\frac{P_{\hat{a}}\left(x,y\right)}{K}\gamma\Delta\memread\left(y\right)\right\Vert _{\infty}\\
 & +\left\Vert \frac{K-1}{K}\Delta\memread\left(x\right)\right\Vert _{\infty}\\
 & \leq\sum_{y\in\mathcal{T}}\frac{P_{\hat{a}}\left(x,y\right)}{K}\gamma\left\Vert \Delta\memread\left(y\right)\right\Vert _{\infty}\\
 & +\left\Vert \frac{K-1}{K}\Delta\memread\right\Vert _{\infty}\\
 & \leq\frac{\gamma+K-1}{K}\left\Vert \memread_{1}-\memread_{2}\right\Vert _{\infty}
\end{align*}
Since $\gamma<1$, $0<\gamma_{K}=\frac{\gamma+K-1}{K}<1\:\forall K\geq1$.
Thus, $\mathrm{H}$ is a contraction in the sup-norm and there exists
a fix-point $\memread^{*}$ such that $\mathrm{H}\memread^{*}=\memread^{*}$. 

We define $\Delta_{t}=\memread_{t}-\memread^{*}$, then

\[
\Delta_{t+1}=\Delta_{t}\left(x\right)\left(1-\alpha_{w,n}\right)+\alpha_{w,t}F_{t}\left(x\right)
\]
where $F_{t}\left(x\right)=G_{t}\left(x\right)-\memread^{*}\left(x\right)$.
We have

\begin{align*}
\mathbb{E}\left(F_{t}\left(x\right)\mid F_{t}\right) & =\sum_{\hat{y}\in\mathcal{T}}P_{a^{*}}\left(x,y\right)G_{t}\left(x|a^{*}\right)-\memread^{*}\left(x\right)\\
 & =\mathrm{H}\memread_{t}\left(x\right)-\memread^{*}\left(x\right)
\end{align*}
Following the proof in \cite{melo2001convergence}, $\mathbb{E}\left(F_{t}\left(x\right)\mid F_{t}\right)\leq\gamma_{K}\left\Vert \Delta_{t}\left(x\right)\right\Vert _{\infty}$
and $\mathrm{var}\left(F_{t}\left(x\right)\mid F_{t}\right)$<$C\left(1+\left\Vert \Delta_{t}\left(x\right)\right\Vert \right)^{2}$for
$C>0$. Assume that $\sum_{t=1}^{\infty}\alpha_{w,t}=\infty$ and
$\sum_{t=1}^{\infty}\alpha_{w,t}^{2}<\infty$, according to \cite{jaakkola1994convergence},
$\Delta_{t}$ converges to 0 with probability 1 or $\memread$ converges
to $\memread^{*}$.

\subsection{Experimental Details \label{subsec:Experimental-Details}}

\begin{table}
\begin{centering}
\begin{tabular}{cccc}
\hline 
Task & MBEC & MBEC++ & DQN\tabularnewline
\hline 
2D Maze & 2K & N/A & 43K\tabularnewline
Classical control & N/A & 39K & 43K\tabularnewline
Atari games & N/A & 13M & 13M\tabularnewline
3D Navigation & N/A & 13M & 13M\tabularnewline
\hline 
\end{tabular}
\par\end{centering}
\caption{The number of trainable parameters of MBEC(++) and its main competitor
DQN.\label{tab:Number-of-trainable}}

\end{table}
\begin{figure}
\begin{centering}
\includegraphics[width=1\columnwidth]{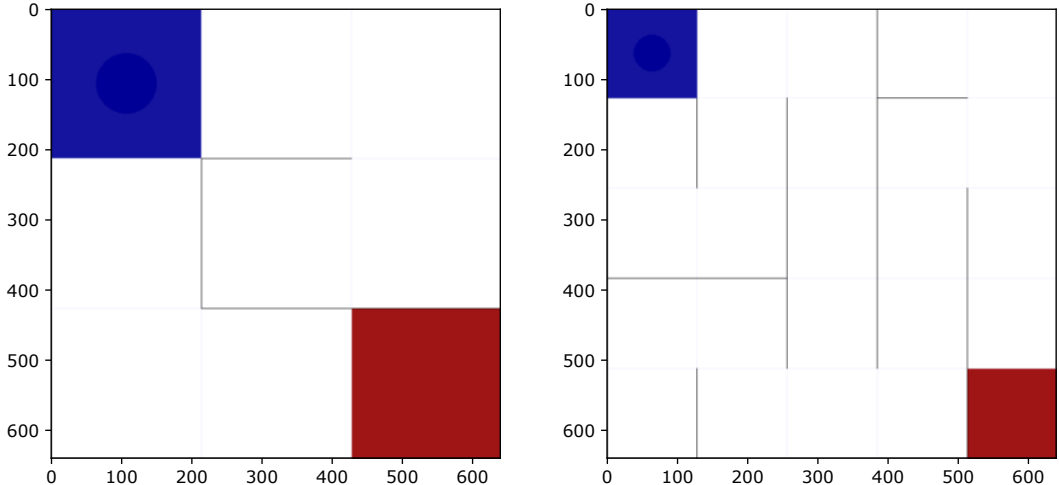}
\par\end{centering}
\caption{Maze map $3\times3$ (left) and $5\times5$ (right). The agent starts
from the left top corner (blue square) and finds the goal at the right
bottom corner (red square).\label{fig:mazemap}}
\end{figure}

\subsubsection{Implemented baseline description}

In this section, we describe baselines that are implemented and used
in our experiments. We trained all the models using a single GPU Tesla
V100-SXM2. 

\paragraph{Model-based Episodic Control  (MBEC, ours)}

The main hyper-parameter of MBEC is the number of neighbors ($K$),
chunk length ($L$) and memory slots ($N$). Hyper-parameter tuning
is adjusted according to specific tasks. For example, for small and
simple problems, $K$ and $N$ tend to be smaller and $L$ is often
about $20-30\%$ of the average length per episode. Across experiments,
we follow prior works using $\gamma=0.99$. We also fix $\alpha_{w}=0.5$
to reduce hyperparameter tuning. To implement $\memread$ and $\memwrite$,
we set $K=K_{r}=K_{w}$ and use the kernel $\left\langle x,y\right\rangle =\frac{1}{\left\Vert x-y\right\Vert +\epsilon}$
with $\epsilon=10^{-3}$ following \cite{pritzel2017neural}. 

Unless stated otherwise, the hidden size of the trajectory model is
fixed to $H=16$ for all tasks. The reward model is implemented as
a 2-layer ReLU feed-forward neural network and trained with batch
size 32 for all tasks. To compute TR Loss, we sample 4 past transitions
and add Gaussian noise (mean 0, std. $0.1\times\left\Vert q\right\Vert _{2}$)
to the query vector $q$. Notably, in MBEC++, when training with TD
loss, we do not back-propagate the gradient to the trajectory model
to ensure that the trajectory representations are only shaped by the
TR Loss. 

In practice, to reduce computational complexity, we do not perform
$\mathrm{refine}$ operator every timestep. Rather, at each step,
we randomly $\mathrm{refine}$ with a probability $p_{u}=0.1$. Similarly,
we occasionally update the parameters of the trajectory model. Every
$L$ step, we randomly update $\phi$ and $\omega$ using back-propagation
via $\mathcal{L}_{tr}$ with probability $p_{rec}=0.5$. For Atari
games, we stop training the trajectory model after 5 million steps.
On our machine for Atari games, these tricks generally make MBEC++
run at speed 100 steps/s while DQN 150 steps/s.

\paragraph{Deep Q-Network (DQN)}

Except for Sec. \ref{subsec:Atari-2600-Benchmark} and \ref{subsec:POMDP:-3D-Navigation},
we implement DQN\footnote{\url{https://github.com/higgsfield/RL-Adventure}}
with the following hyper-parameters: 3-layer ReLU feed-forward Q-network
(target network) with hidden size 144, target update every 100 steps,
TD update every 1 step, replay buffer size $10^{6}$ and Adam optimizer
with batch size 32. The exploration rate decreases from $1$ to $0.01$.
We tune the learning rate for each task in range $\left[10^{-3},10^{-5}\right]$.
For tasks with image input, the Q-network (target network) is augmented
with CNN to process the image depending on tasks. MBEC++ adopts the
same DQN with a smaller hidden size of 128. Table \ref{tab:Number-of-trainable}
compares model size between DQN and MBEC(++). Regarding memory usage,
for Atari games, DQN consumes 1,441 MB and MBEC++ 1,620 MB. 

\paragraph{Model-Free Episodic Control (MFEC)}

This episodic memory maintains a value table using $K$-nearest neighbor
to read the value for a query state and $\max$ operator to write
a new value. We set the key dimension and memory size to 64 and $10^{6}$,
respectively. We tune $K\in\left\{ 3,5,11,25\right\} $ for each task.
Unless stated otherwise, we use random projection for MFEC. For VAE-CNN
version used in dynamic maze mode, we use 5-convolutional-layer encoder
and decoder (16-128 kernels with 4\texttimes 4 kernel size and a stride
of 2). Other details follow the original paper \cite{blundell2016model}.

\paragraph{Neural Episodic Control (NEC)}

This model extends MFEC using the state-key mapping as a CNN embedding
network trained to minimize the TD error of memory-based value estimation.
Also, multi-step Q-learning update is employed for memory writing.
We adopt the publicly available source code \footnote{\url{https://github.com/hiwonjoon/NEC}}
which follows the same hyper-parameters used in the original paper
\cite{pritzel2017neural} and apply it to stochastic control problem
by implementing the embedding network as a 2-layer feed-forward neural
network. We tune $K\in\left\{ 3,5,11,25\right\} $ and the hidden
size of the embedding network $\in\left\{ 32,64,128,256\right\} $
for each task. 

\paragraph{Proximal Policy Optimization (PPO)}

PPO \cite{schulman2017proximal} is a policy gradient method that
simplifies Trust Region update with gradient descent and soft constraint
(maintaining low KL divergence between new and old policy via objective
clipping). We test PPO for the 3D Navigation task using the original
source code of the environment Gym Mini World. 

\paragraph{Deep Recurrent Q-Network (DRQN)}

DRQN \cite{hausknecht2015deep} is similar to DQN except that it uses
LSTM as the Q-Network. As the hidden state of LSTM represents the
environment state for the Q-Network, it captures past information
that may be necessary for the agent in POMDP. We extend DQN to DRQN
by storing transitions with the hidden states in the replay buffer
and replacing the feed-forward Q-Network with an LSTM Q-Network. We
tune the hidden size of the LSTM $\in\left\{ 128,256,512\right\} $
for 3D navigation task.

\subsubsection{Maze task \label{subsec:Maze-task}}

\paragraph{Task overview}

In the maze task, if the agent hits the wall of the maze, it gets
$-1$ reward. If it reaches the goal, it gets $1$ reward. For each
step in the maze, the agent get $-0.1/n_{e}^{2}$ reward. An episode
ends either when the agent reaches the goal or the number of steps
exceeds 1000. 

To build different modes of the task, we modify the original gym-maze
environment\footnote{\url{https://github.com/MattChanTK/gym-maze}}.
Fig. \ref{fig:mazemap} illustrates the original $3\times3$ and $5\times5$
maze structure. We train and tune MBEC and other baselines for $3\times3$
maze task and use the found hyper-parameters for other task modes.
For MBEC, the best hyper-parameters are $K=5$, $L=5$, $N=1000$.

\paragraph{Transition Prediction (TP) loss}

For dynamic mode and ablation study for stochastic control tasks,
we adopt a common loss function to train the traditional model in
model-based RL: the transition prediction (TP) loss. Trained with
the TP loss, the model tries to predict the next observations given
current trajectory and observations. The TP loss is concretely defined
as follows,

\begin{align}
\mathcal{L}_{tp} & =\mathbb{E}\left(\left\Vert y^{*}\left(t\right)-\left[s_{t+1},a_{t+1}\right]\right\Vert _{2}^{2}\right)\label{eq:trj_l-1}\\
y^{*}\left(t\right) & =\mathcal{G}_{\omega}\left(\mathcal{T_{\phi}}\left(\left[\tilde{s}_{t},\tilde{a}_{t}\right],\overrightarrow{\tau}_{t-1}\right)\right)\label{eq:rec-1}
\end{align}
The key difference between TP loss and TR loss is the timestep index.
TP loss takes observations at current timestep to predict the one
at the next timestep. On the other hand, TR loss takes observations
at past timestep and uses the current working memory (hidden state
of the LSTM) to reconstruct the observations at the timestep after
the past timestep. Our experiments consistently show that TP loss
is inferior to our proposed TR loss (see Sec. \ref{subsec:app_scc}). 
\begin{center}
\par\end{center}

\begin{figure*}
\begin{centering}
\includegraphics[width=0.9\textwidth]{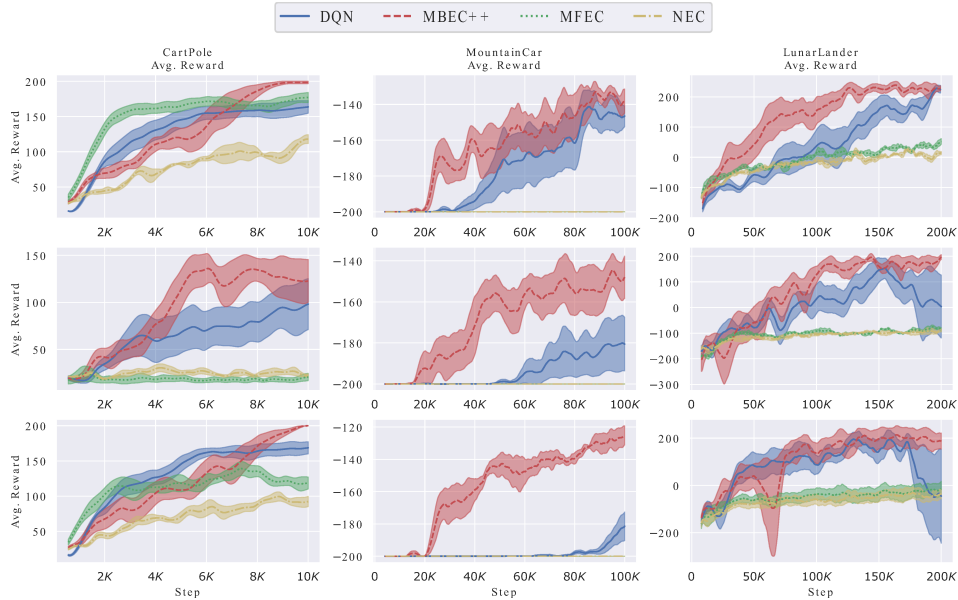}
\par\end{centering}
\caption{All learning curves for stochastic classical control task. First row:
Gaussian noisy reward. Second row: Bernoulli noisy reward. Third row:
Noisy transition \label{fig:all9classic}}
\end{figure*}

\begin{center}
\begin{figure*}
\begin{centering}
\includegraphics[width=1\textwidth]{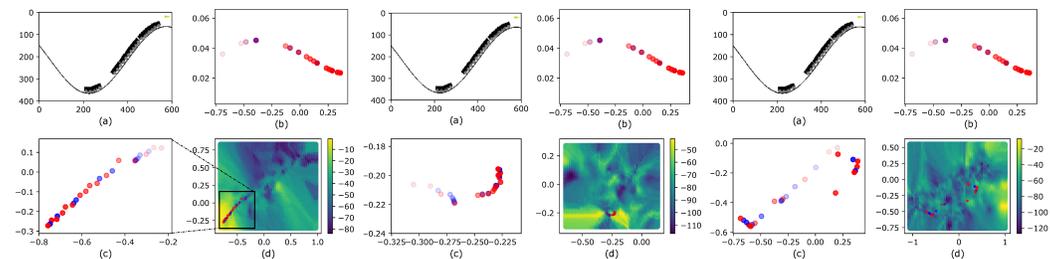}
\par\end{centering}
\caption{Stochastic Mountain Car. Trajectory space in noisy transition using
models trained with TR loss (left), no training (middle) and TP loss
(right). \label{fig:mc_vc_noise}}
\end{figure*}
\par\end{center}

\subsubsection{Stochastic classical control task \label{subsec:app_scc}}

\paragraph{Task description}

We introduce three ways to make a classical control problem stochastic.
First, we add Gaussian noise (mean 0, $\sigma_{re}=0.2$) to the reward
that the agent observes. Second, we add Bernoulli-like noise (with
a probability $p_{re}=0.2$, the agent receives a reward $-r$ where
$r$ is the true reward). Finally, we make the observed transition
noisy by letting the agent observe the same state despite taking any
action with a probability $p_{tr}=0.5$. The randomness only affects
what the agent sees while the environment dynamic is not affected.
Three classical control problems are chosen from Open AI's gym: CartPole-v0,
MountainCar-v0 and LunarLander-v2. For each problem, we apply the
three stochastic configurations, yielding 9 tasks in total. 

Fig. \ref{fig:all9classic} showcases the learning curves of DQN,
MBEC++, MFEC and NEC for all 9 tasks. MBEC++ is consistently the leading
performer. DQN is often the runner-up, yet usually underperforms our
method by a significant margin. Overall, other memory-based methods
such as MFEC and NEC perform poorly for these tasks since they are
not designed for stochastic environments.

\paragraph{Memory contribution }

We determine the episodic and semantic contribution to the final value
estimation by counting the number of times their greedy actions equal
the final greedy action, dividing by the number of timesteps. Concretely,
the episodic and semantic contribution is computed respectively as

\[
\frac{\sum_{t=1}^{T}\argmax_{a_{t}}Q_{eps}\left(s_{t},a_{t}\right)==\argmax_{a_{t}}Q\left(s_{t},a_{t}\right)}{T}
\]

\[
\frac{\sum_{t=1}^{T}\argmax_{a_{t}}Q_{\theta}\left(s_{t},a_{t}\right)==\argmax_{a_{t}}Q\left(s_{t},a_{t}\right)}{T}
\]
where $Q_{eps}\left(s_{t},a_{t}\right)=Q_{MBEC}\left(s_{t},a_{t}\right)f_{\beta}\left(s_{t},\overrightarrow{\tau}_{t-1}\right)$,
$Q_{\theta}\left(s_{t},a_{t}\right)$ and $Q\left(s_{t},a_{t}\right)$
represent the episodic, semantic and final value estimation, respectively.

Fig. \ref{fig:mcontrib} illustrates the running average contribution
using a window of 100 timesteps. We note that the contribution of
the two does not need to sum up to 1 as both can agree with the same
greedy action. 
\begin{center}
\begin{figure*}
\begin{centering}
\includegraphics[width=0.9\textwidth]{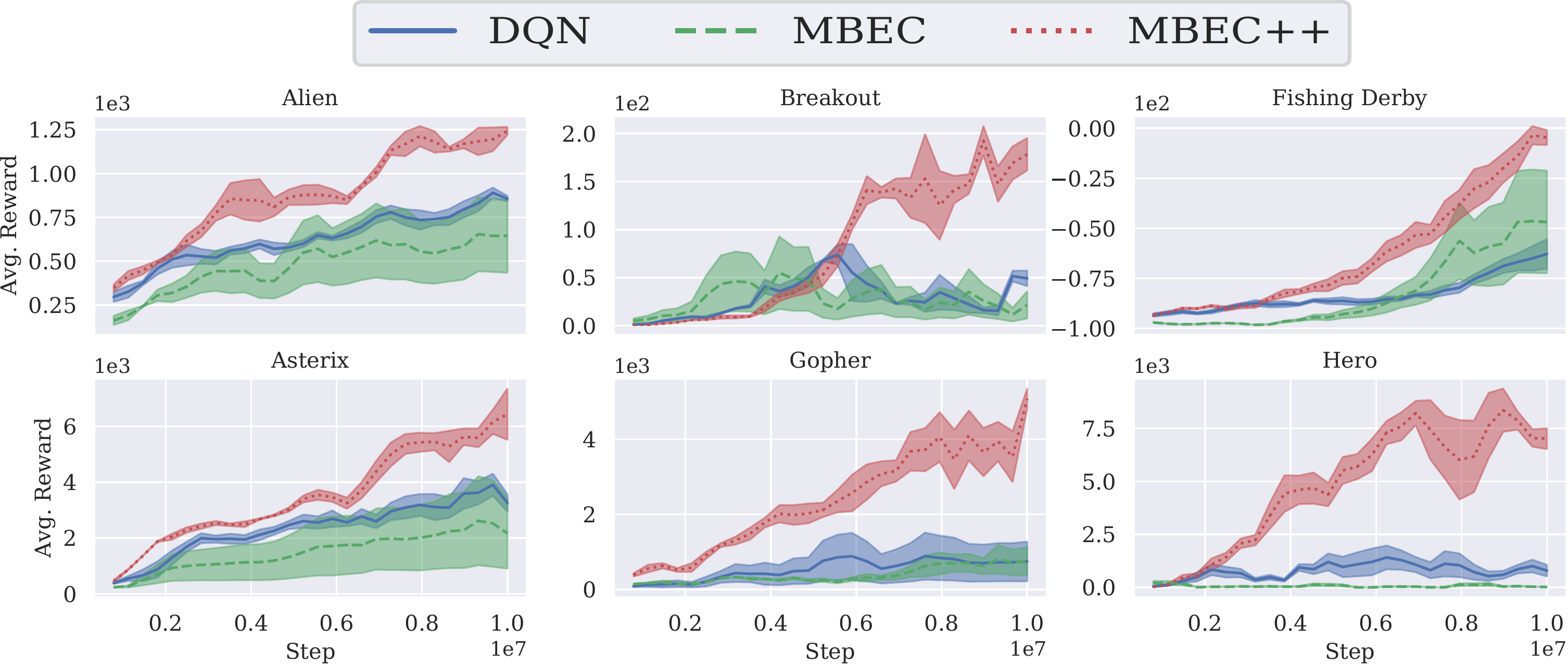}
\par\end{centering}
\caption{Learning curves of several Atari games.\label{fig:rl6}}
\end{figure*}
\par\end{center}

\begin{center}
\begin{figure}
\begin{centering}
\includegraphics[width=1\columnwidth]{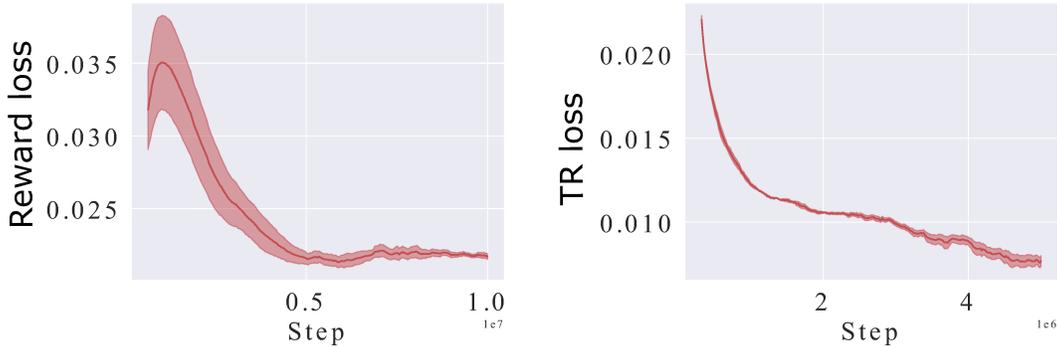}
\par\end{centering}
\caption{Loss of reward model and TR loss over training iterations for Atari's
Asterix and Gopher, respectively.\label{fig:atari_l2}}
\end{figure}
\begin{table*}
\begin{centering}
\begin{tabular}{ccccc}
\hline 
Game & Nature DQN & MFEC & NEC & MBEC++\tabularnewline
\hline 
Alien & 634.8  & 1717.7  & \textbf{3460.6} & 1991.2\tabularnewline
Amid & 126.8  & 370.9  & \textbf{811.3} & 369.0\tabularnewline
Assault & 1489.5  & 510.2  & 599.9 & \textbf{4981.3}\tabularnewline
Asterix & 2989.1  & 1776.6  & 2480.4 & \textbf{7724.0}\tabularnewline
Asteroids  & 395.3  & \textbf{4706.8 } & 2496.1 & 1456.2\tabularnewline
Atlantis  & 14210.5  & 95499.4  & 51208.0 & 99270.0\tabularnewline
Bank Heist  & 29.3 & 163.7 &  343.3 & 1126.4\tabularnewline
Battlezone  & 6961.0  & 19053.6  & 13345.5 & 30004.0\tabularnewline
Beamrider & 3741.7 & 858.8  & 749.6 & \textbf{5875.2}\tabularnewline
Berzerk  & 484.2 & \textbf{924.2} &  852.8 & 759.2\tabularnewline
Bowling  & 35.0 & 51.8 &  71.8 & \textbf{80.6}\tabularnewline
Boxing  & 31.3 & 10.7  & 72.8 & \textbf{95.8}\tabularnewline
Breakout  & 36.8  & 86.2  & 13.6 & \textbf{372.2}\tabularnewline
Centipede  & 4401.4  & \textbf{20608.8} &  12314.5 & 8693.8\tabularnewline
Chopper Command  & 827.2 & 3075.6 & \textbf{5070.3} & 1694.0\tabularnewline
Crazy Climber & 66061.6  & 9892.2  & 34344.0 & \textbf{107740.0}\tabularnewline
Defender & 2877.90 & 10052.80 & 6126.10 & \textbf{690956.0}\tabularnewline
Demon Attack  & 5541.9  & 1081.8  & 641.4 & 8066.4\tabularnewline
Double Dunk  & -19.0 & -13.2 & \textbf{ 1.8} & -1.8\tabularnewline
Enduro  & \textbf{364.9 } & 0.0  & 1.4 & 343.7\tabularnewline
Fishing Derby  & -81.6 & -90.3 &  -72.2 & \textbf{17.6}\tabularnewline
Freeway  & 21.5  & 0.6  & 13.5 & \textbf{33.1}\tabularnewline
Frostbite  & 339.1  & 925.1 & \textbf{ 2747.4 } & 1783.0\tabularnewline
Gopher  & 1111.2  & 4412.6  & 2432.3 & \textbf{11386.4}\tabularnewline
Gravitar  & 154.7  & 1011.3 & \textbf{ 1257.0} & 428.0\tabularnewline
H.E.R.O.  & 1050.7  & 14767.7  & \textbf{16265.3} & 12148.5\tabularnewline
Ice Hockey  & -4.5 & -6.5 &  -1.6 & \textbf{-1.5}\tabularnewline
James Bond  & 165.9  & 244.7 &  376.8 & \textbf{898.0}\tabularnewline
Kangaroo  & 519.6  & 2465.7  & 2489.1 & \textbf{16464.0}\tabularnewline
Krull  & 6015.1  & 4555.2  & 5179.2 & \textbf{9031.38}\tabularnewline
Kung Fu Master  & 17166.1  & 12906.5  & 30568.1 & \textbf{37100.0}\tabularnewline
Montezuma\textquoteright s Revenge  & 0.0  & \textbf{76.4 } & 42.1 & 0.0\tabularnewline
Ms. Pac-Man  & 1657.0  & 3802.7  & \textbf{4142.8} & 2687.2\tabularnewline
Name This Game  & 6380.2  & 4845.1 &  5532.0 & \textbf{7822.8}\tabularnewline
Phoenix  & 5357.0 & 5334.5  & 5756.5 & \textbf{15051.8}\tabularnewline
Pitfall!  & \textbf{0.0 } & -79.0  & \textbf{0.0} & \textbf{0.0}\tabularnewline
Pong  & -3.2 & -20.0 &  20.4 & \textbf{20.8}\tabularnewline
Private Eye  & 100.0  & \textbf{3963.8 } & 162.2 & 100.0\tabularnewline
Q{*}bert  & 2372.5  & \textbf{12500.4 } & 7419.2 & 8686.0\tabularnewline
River Raid  & 3144.9 & 4195.0 &  5498.1 & \textbf{10656.4}\tabularnewline
Road Runner  & 7285.4  & 5432.1  & 12661.4 & \textbf{55284.0}\tabularnewline
Robot Tank  & 14.6  & 7.3  & 11.1 & \textbf{23.9}\tabularnewline
Seaquest  & 618.7  & 711.6 & 1015.3 & \textbf{10460.2 }\tabularnewline
Skiing  & -19818.0  & -15278.9  & -26340.7 & -10016.0 \tabularnewline
Solaris  & 1343.0  & \textbf{8717.5 } & 7201.0 & 1692.0\tabularnewline
Space Invaders  & 642.2 & \textbf{2027.8 } & 1016.0 & 1425.6\tabularnewline
Stargunner  & 604.8  & 14843.9  & 1171.4 & 49640.0\tabularnewline
Tennis & 0.0  & -23.7  & -1.8 & \textbf{18.8}\tabularnewline
Time Pilot & 1952.0 & \textbf{10751.3 } & 10282.7 & 6752.0\tabularnewline
Tutankham  & 148.7  & 86.3  & 121.6 & 206.36\tabularnewline
Up\textquoteright n Down  & 18964.9  & 22320.8 & \textbf{ 39823.3} & 21743.2\tabularnewline
Venture  & 3.8  & 0.0 &  0.0 & \textbf{1092.4}\tabularnewline
Video Pinball  & 14316.0  & 90507.7 &  22842.6 & \textbf{182887.9}\tabularnewline
Wizard of Wor  & 401.4  & \textbf{12803.1 } & 8480.7 & 6252.0\tabularnewline
Yars\textquoteright{} Revenge  & 7614.1  & 5956.7  & 21490.5 & \textbf{21889.8}\tabularnewline
Zaxxon & 200.3  & 6288.1  & 10082.4 & \textbf{11180.0}\tabularnewline
\hline 
\end{tabular}
\par\end{centering}
\caption{Scores at 10 million frames.\label{tab:Scores-at-10}}
\end{table*}
\par\end{center}

\paragraph{Trajectory space visualization}

We visualize the memory-based value function w.r.t trajectory vectors
in Fig. \ref{fig:vis_mc_full} (d). As such, we set the trajectory
dimension to $H=2$ and estimate the value for each grid point (step
0.05) using $\memread$ operator as

\[
V\left(\overrightarrow{\tau}\right)\approx\sum_{i\in\mathcal{N}^{K_{r}}(\overrightarrow{\tau})}\frac{\left\langle \mathcal{M}_{i}^{k},\overrightarrow{\tau}\right\rangle \mathcal{M}_{i}^{v}}{\sum_{j\in\mathcal{N}^{K}(\overrightarrow{\tau})}\left\langle \mathcal{M}_{j}^{k},\overrightarrow{\tau}\right\rangle }
\]

To cope with noisy environment, MBEC relies on noise-tolerant trajectory
representation. As demonstrated in Fig. \ref{fig:mc_vc_noise}, even
when the state representations are disturbed by not changing to true
states, the trajectory representations shaped by the TR loss maintain
good approximation and interpolate well the latent location of disturbed
trajectories. In contrast, representations generated by random model
or model trained with TP loss fail to discover latent location of
disturbed trajectories, either collapsing (random model) or shattering
(TP loss model).

The failure of TP loss is understandable since it is very hard for
predicting the next transition when half of the ground truth is noisy
($p_{tr}=0.5$). On the other hand, TR loss facilitates easier learning
wherein the model only needs to reconstruct past observations which
are somehow already encoded in the current representation. 
\begin{center}
\begin{figure*}
\begin{centering}
\includegraphics[width=1\linewidth]{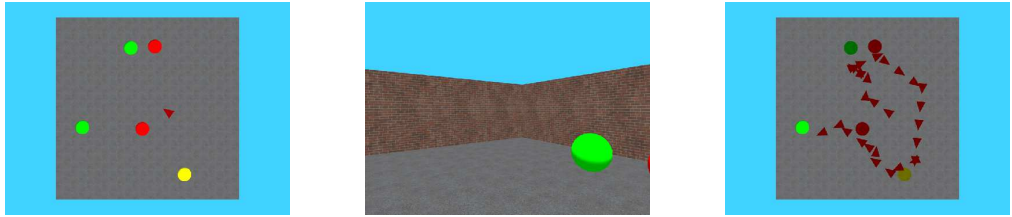}
\par\end{centering}
\caption{3D Navigation: picking 5 objects task. Top view map (left) and frontal
view-port or the observed state of the agent (middle) and a solution
found by MBEC++ (right).\label{fig:3dmap}}
\end{figure*}
\par\end{center}

\begin{figure}
\begin{centering}
\includegraphics[width=1\columnwidth]{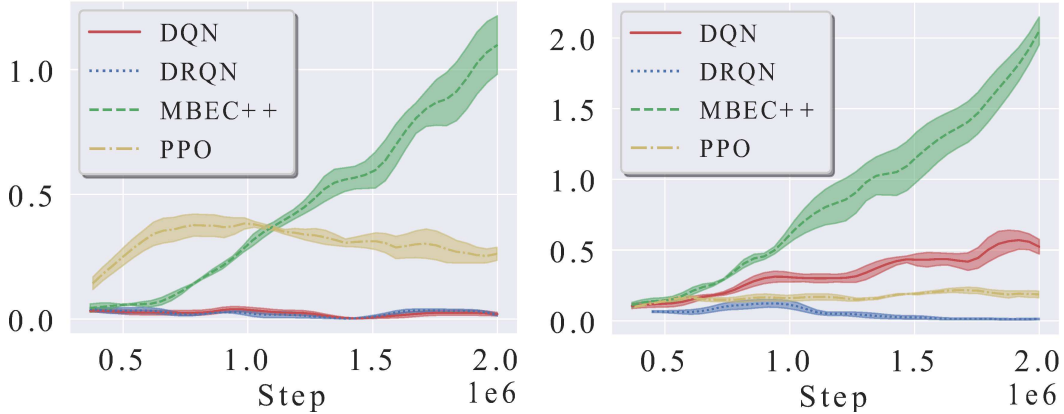}
\par\end{centering}
\caption{3D Navigation. Average reward for picking 3 (left) and 5 (right) objects
over environment steps (mean and std. over 5 runs). \label{fig:3d}}
\end{figure}

\subsubsection{Atari 2600 task \label{subsec:app_atari}}

We use Adam optimizer with a learning rate of $10^{-4}$, batch size
of 32 and only train the models within 10 million frames for sample
efficiency. Other implementations follows \cite{mnih2015human} (CNN
architecture, exploration rate, 4-frame stacking, reward clipping,
etc.). In our implementation, at timestep t, we use frames at t,t-1,t-2,t-3
and still count t as the current frame as well as the current timestep.
We tune the hyper-parameters for MBEC++ using the standard validation
procedure, resulting in $K=7$, $L=100$ and $N=50,000$. 

We also follow the training and validation procedure from \cite{mnih2015human}
using a public DQN implementation\footnote{\url{https://github.com/Kaixhin/Rainbow}}.
The CNN architecture is a stack of 4 convolutional layers with numbers
of filters, kernel sizes and strides of $\left[32,64,64,1024\right]$,$\left[8,4,3,3\right]$
and $\left[4,2,1,1\right]$, respectively. The Atari platform is Open
AI's Atari environments\footnote{\url{https://gym.openai.com/envs/atari}}.

In order to understand better the efficiency of MBEC++, we record
and compare the learning curves of MBEC++, MBEC and DQN (our implementation
using the same training procedure) in Fig. \ref{fig:rl6}. We run
the 3 models on 6 games (Alien, Breakout, Fishing Derby, Asterix,
Gopher and Hero), and plot the average performance over 5 random seeds.
We observe a common pattern for all learning curves, in which the
performance gap between MBEC++ and DQN becomes clearer around 5 million
steps and gets wider afterwards. We note that MBEC++ demonstrate fast
learning curves for some games (e.g., Breakout and Fishing Derby)
that other methods (DQN, MFEC or NEC) struggle to learn.

We realize that early stopping of training trajectory model or reward
model does not affect the performance much as the quality of trajectory
representations and reward prediction is acceptable at about 5 millions
steps (see Fig. \ref{fig:atari_l2}). Early stopping further accelerates
the running speed of MBEC++ and also helps stabilize the learning
of the Q-Networks. 

Table \ref{tab:Scores-at-10} reports the final testing score of MBEC++
and other baselines for all Atari games. We note that we only conducted
five runs for the Atari games mentioned in Fig. 9. For the remaining
games, our limited compute budget did not allow us to perform multiple
runs, and thus, we only ran once. We store the best MBEC++ models
based on validation score for each game during training and test them
for $100$ episodes. Other baselines' numbers are reported from \cite{pritzel2017neural}.
Compared to other baselines, MBEC++ is the winner on the leaderboard
for about half of the games. 

Notably, our episodic memory is much smaller than that of others.
For example, NEC and EMDQN maintain 5 millions slots per action (there
are total 18 actions). Meanwhile, our best number of memory slots
$N$ is only 50,000. 

\begin{table}
\begin{centering}
\begin{tabular}{ccccccc}
\hline 
Model & Alien & Asterix & Breakout & Fishing Derby & Gopher & Hero\tabularnewline
\hline 
Dreamer-v2 & \textbf{2950.1 } & 3100.8  & 57.0 & -13.6  & \textbf{16002.8} & \textbf{13552.9} \tabularnewline
Our MBEC++ & 1991.2  & \textbf{7724.0 } & \textbf{372.2 } & \textbf{17.6 } & 11386.4  & 12148.5\tabularnewline
\hline 
\end{tabular}
\par\end{centering}
\caption{Dreamver-v2 vs MBEC++ on 6 Atari games at 10M frames. We report the
best results of the models after three runs.\label{tab:Dreamver-v2-vs-MBEC++}}
\end{table}

\begin{table}
\begin{centering}
\begin{tabular}{ccccccc}
\hline 
{\scriptsize{}Model} & {\scriptsize{}Alien} & {\scriptsize{}Asterix} & {\scriptsize{}Breakout} & {\scriptsize{}Fishing Derby} & {\scriptsize{}Gopher} & {\scriptsize{}Hero}\tabularnewline
\hline 
{\scriptsize{}SIMPLE$^{\clubsuit}$} & \textbf{\scriptsize{}378.3 \textpm{} 85.5 } & {\scriptsize{}668.0 \textpm{} 294.1 } & {\scriptsize{}6.1 \textpm{} 2.8 } & {\scriptsize{}-94.5 \textpm{} 3.0 } & \textbf{\scriptsize{}510.2 \textpm{} 158.4}{\scriptsize{} } & {\scriptsize{}621.5 \textpm{} 1281.3}\tabularnewline
{\scriptsize{}Our MBEC++} & {\scriptsize{}340.5 \textpm{} 39.7 } & \textbf{\scriptsize{}810.1 \textpm{} 42.4 } & \textbf{\scriptsize{}11.9 \textpm{} 2.0}{\scriptsize{} } & \textbf{\scriptsize{}-81.5 \textpm{} 2.3}{\scriptsize{} } & {\scriptsize{}459.2 \textpm{} 60.4 } & \textbf{\scriptsize{}1992.8 \textpm{} 1171.9}\tabularnewline
\hline 
\end{tabular}
\par\end{centering}
\caption{SIMPLE vs MBEC++ on 6 Atari games at 400K frames. Mean and std over
5 runs. $\clubsuit$ is from \cite{kaiser2019model}. \label{tab:SIMPLE-vs-MBEC++-1}}
\end{table}

\subsubsection{3D navigation task \label{subsec:app_3dnav}}

In this task, the agent's goal is to pick objects randomly located
in a big room\footnote{\url{https://github.com/maximecb/gym-miniworld}}.
There are 5 possible actions (moving directions and object interaction)
and the number of objects is customizable. We train MBEC++, DQN and
DRQN using the same training procedure and CNN for state encoding
as in the Atari task. Except for DQN, other baselines (DRQN and PPO)
uses LSTM to encode the state of the environment. We also stack 4
consecutive frames to help the models (especially DQN) cope with the
environment's limited observation. For PPO, we tuned the clipping
threshold \{0.2, 0.5, 0.8\} and reported the best result (0.2). 

Fig. \ref{fig:3dmap} illustrates one sample of the environment map
and a solution found by MBEC++. The best hyper-parameters for MBEC++
are $K=15$, $L=20$ and $N=10000$.

\subsubsection{Ablation study\label{subsec:Ablation-study}}

\paragraph{Classical control}

We tune MBEC++ with Noisy Transition Mountain Car problem using range
$K\in\left\{ 1,5,15\right\} $, $L=\left\{ 1,10,50\right\} $, $N=\left\{ 500,3000,30000\right\} $.
We use the best found hyper-parameters ($K=15$, $L=10$, $N=3000$)
for all 9 problems. The learning curves of MBEC++ in Noisy Transition
Mountain Car are visualized in Fig. \ref{fig:mc_abl} (the first 3
plots). 

We also conduct an ablation study on MBEC++: (i) without TR loss,
(ii) with TP loss instead (iii), without multiple write ($K_{w}=1$)
and (iv) without memory refining. The result demonstrates that ablating
any components reduces the performance of MBEC++ significantly (see
Fig. \ref{fig:mc_abl} (the final plot)). 

\paragraph{Dynamic consolidation}

We compare our dynamic consolidation with traditional fixed combinations
in both simple and complex environments. Fixed combination baselines
use fixed $\beta$ in Eq. \ref{eq:qe-qs}, resulting in

\[
Q\left(s_{t},a_{t}\right)=Q_{MBEC}\left(s_{t},a_{t}\right)\beta+Q_{\theta}\left(s_{t},a_{t}\right)
\]

In CartPole (Gaussian noisy reward), all fixed combinations achieve
moderate results, yet fails to solve the task after 10,000 training
steps. Dynamic consolidation learning to generate dynamic $\beta$,
in contrast, completely solves the task (see Fig. \ref{fig:Ablation-studies-on}
(a)). 

In Atari game's Riverrraid--a more complex environment, the performance
gap between dynamic $\beta$ and fixed $\beta$ becomes clearer. Average
reward of dynamic consolidation reaches nearly 7,000 while the best
fixed combination's ($\beta=0.1$) is less than 3,000 (see Fig. \ref{fig:Ablation-studies-on}
(b)).

\begin{figure}
\begin{centering}
\includegraphics[width=1\columnwidth]{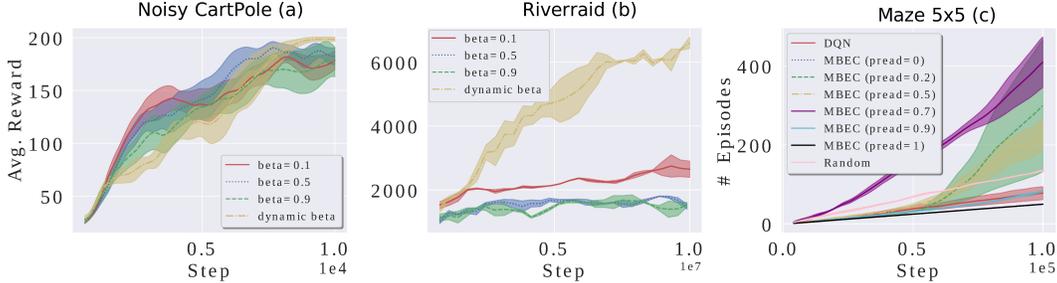}
\par\end{centering}
\caption{Ablation studies on dynamic consolidation (a,b) and $p_{read}$ (c).
All curves are reported with mean and std. over 5 runs. \label{fig:Ablation-studies-on}}
\end{figure}

\paragraph{Tuning $p_{read}$}

Besides modified modes introduced in the main manuscript, we investigate
MBEC with different $p_{read}$ and DQN in a bigger maze ($5\times5$)
for the original setting. As shown in Fig. \ref{fig:Ablation-studies-on}
(c), many MBEC variants successfully learn the task, significantly
better in the number of completed episodes compared to random and
DQN agents. We find that when $p_{read}=1$ (only average reading),
the performance of MBEC is negatively affected, which indicates that
the role of max reading is important. Similar situation is found for
for $p_{read}=0$ (only max reading). Among all variants, $p_{read}=0.7$
shows stable and fastest learning. Hence, we set $p_{read}=0.7$ for
all other experiments in this paper.

\end{document}